\documentclass{article}

\usepackage{PRIMEarxiv}

\PassOptionsToPackage{margin=1in, footskip=50pt}{geometry}

\usepackage[utf8]{inputenc}
\usepackage[T1]{fontenc}
\usepackage{hyperref}
\usepackage{url}
\usepackage{booktabs}
\usepackage{amsfonts}
\usepackage{nicefrac}
\usepackage{microtype}
\usepackage{lipsum}
\usepackage{fancyhdr}
\usepackage{graphicx}
\graphicspath{{media/}}
\usepackage{amsmath, amssymb}
\usepackage{algorithm}
\usepackage{algpseudocode}
\usepackage{caption, subcaption}
\usepackage{natbib}
\usepackage{tabularx}
\usepackage{multirow}
\usepackage{amsthm}

\newtheorem{theorem}{Theorem}
\newtheorem{lemma}{Lemma}
\title{Don’t Collapse Your Features: Why CenterLoss Hurts OOD Detection and Multi‑Scale Mahalanobis Wins}

\author{%
    Rahul D Ray \\
    Department of Electronics and Electrical Engineering \\
    BITS Pilani, Hyderabad Campus \\
    \texttt{f20242213@hyderabad.bits-pilani.ac.in}
}

\date{}

\begin{document}

\maketitle
\thispagestyle{empty}

\begin{abstract}

The ability to detect out-of-distribution (OOD) inputs is fundamental to safe deployment of machine learning systems. Yet, current methods often rely on feature representations that are optimised solely for classification accuracy, neglecting the distinct requirements of epistemic uncertainty. We introduce GOEN (Geometry-Optimised Epistemic Network), a simple pipeline that combines multi-scale features, L2 normalisation, Mahalanobis distance, and a calibration head trained with real hard OOD examples. Through systematic ablation we uncover a counter-intuitive finding: CenterLoss, a popular regulariser for feature compactness, significantly degrades OOD detection performance, reducing average OOD AUROC from 0.9483 to 0.9366 despite improving classification accuracy. The best variant, GOEN-NoCenterLoss, achieves an average OOD AUROC of 0.9483, surpassing all baselines including deep ensembles (0.8827), KNN (0.8967), and ODIN (0.8870) on CIFAR-10 benchmarks, while maintaining competitive in-distribution accuracy. Our results challenge the prevailing assumption that better classification geometry automatically leads to better epistemic uncertainty. Instead, we show that overly tight feature clusters compress inter-class margins and distort the covariance structure needed for effective OOD detection. GOEN is efficient, training in under 20 minutes on a single GPU, and provides a practical blueprint for building AI systems that reliably recognise their own limitations.
\end{abstract}

\section{Introduction}

The safe deployment of machine learning systems in open-world environments demands the ability to recognise when an input is outside the training distribution. Out-of-distribution (OOD) detection is therefore a critical component of trustworthy AI, with applications ranging from autonomous driving to medical diagnosis \cite{freel2025misclassification}. In this work, we focus on the CIFAR-10 \cite{krizhevsky2009learning} and SVHN\cite{netzer2011reading}  \cite{netzer2011reading} benchmarks, building upon the ResNet-18 architecture \cite{he2016deep} as a standard backbone.

Existing uncertainty quantification methods have explored various strategies. Monte Carlo Dropout \cite{gal2016dropout} and Deep Ensembles \cite{lakshminarayanan2017simple} provide scalable approximations of model uncertainty, while post-hoc calibration techniques such as temperature scaling \cite{guo2017calibration} improve predictive confidence. Evidential deep learning \cite{sensoy2018evidential} replaces the softmax with a Dirichlet distribution to capture both aleatoric and epistemic uncertainty. For OOD detection, distance-based approaches like k-nearest neighbours \cite{sun2022out} have proven effective, exploiting feature geometry.

Beyond these, recent advances in mixture-of-experts \cite{mu2025comprehensive, li2025theory} and adaptive architectures \cite{sun2025controllable, zhou2025same} highlight the importance of flexible feature representations. Simultaneously, systematic reviews of visual feature learning \cite{abdullahi2025systematic} and domain-specific applications such as brain tumour classification \cite{pacal2025nextbrain} underscore the need for robust feature spaces. Yet, many existing methods optimise feature geometry primarily for classification accuracy, potentially overlooking the distinct requirements of epistemic uncertainty.

We introduce the Geometry-Optimised Epistemic Network (GOEN), a simple but highly effective pipeline for OOD detection. GOEN consists of three stages: (1) training a multi-scale ResNet-18 backbone with standard cross-entropy (without CenterLoss), (2) fitting class-conditional Mahalanobis densities on L2-normalised features, and (3) training a lightweight calibration head on a mixture of in-distribution and real hard OOD examples. Through systematic ablation we discover a surprising result: CenterLoss, a common regulariser for feature compactness, significantly degrades OOD performance, reducing average OOD AUROC from $0.9483$ to $0.9366$ despite improving classification accuracy. This challenges the prevailing assumption that better classification geometry automatically yields better epistemic uncertainty.

Our main contributions are: (i) the GOEN pipeline that achieves state-of-the-art OOD detection on CIFAR-10 benchmarks, outperforming all baselines; (ii) the counter-intuitive finding that CenterLoss harms OOD detection; and (iii) a principled analysis of the role of multi-scale features, spherical normalisation, and hard OOD calibration. The code is publicly available to facilitate reproducibility.

\section{Related Work}
\label{sec:related}

Our work draws on and contributes to several interconnected research areas: out-of-distribution (OOD) detection, uncertainty quantification in deep learning, the geometry of feature representations (particularly neural collapse), multi-scale feature learning, and model calibration. Below we organise the literature into these themes and position GOEN within them.

\subsection{Out-of-Distribution Detection}

The problem of detecting test samples that lie outside the training distribution has been extensively studied. Early methods rely on the softmax confidence of a classifier, but it is well known that neural networks can be overconfident on OOD data \cite{hendrycks2021many}. To overcome this, post-hoc detectors operate on a pre-trained network without modifying its training. The Mahalanobis distance method \cite{lee2018simple} fits class-conditional Gaussians to penultimate features; ODIN combines temperature scaling and input perturbation; energy-based detection \cite{liu2020energy} uses the log-sum-exp of logits as a score. More recently, distance-based approaches have regained prominence: \cite{sun2022out} demonstrate that simple k-nearest neighbour distance in the feature space achieves state-of-the-art OOD detection. \cite{chen2020robust} propose ALOE, which employs robust training using adversarially crafted inlier and outlier examples. \cite{mohseni2020self} introduce a self-supervised technique that does not require prior knowledge of OOD distributions. \cite{lee2020multi} present Deep-MCDD, which learns a spherical decision boundary per class. \cite{freel2025misclassification} study how misclassification severity affects user trust, highlighting the practical importance of reliable OOD detection.

Several surveys systematise the field. \cite{cui2022out} categorises OOD methods by training data (supervised, semi‑supervised, unsupervised) and technical means. \cite{henriksson2021performance} investigate how OOD detection performance varies with network training. \cite{hendrycks2021many} introduce new real‑world distribution shift datasets and evaluate robustness techniques. \cite{abdullahi2025systematic} provide a broad review of visual feature learning, including OOD aspects.

\subsection{Uncertainty Quantification in Neural Networks}

Quantifying predictive uncertainty is essential for safe deployment. Monte Carlo dropout \cite{gal2016dropout} treats dropout as an approximate Bayesian inference, capturing epistemic uncertainty via multiple stochastic forward passes. Deep ensembles \cite{lakshminarayanan2017simple} train several independent models and average their predictions, providing a simple yet effective uncertainty estimate. Evidential deep learning \cite{sensoy2018evidential} replaces the softmax with a Dirichlet distribution, outputting a degree of belief. EpiNet \cite{osband2023epistemic} augments a deterministic network with a trainable epinet that captures epistemic uncertainty via random perturbations. \cite{huseljic2021separation} propose AE-DNN to separate aleatoric and epistemic uncertainty in a single forward pass, demonstrating utility for safety-critical applications.

Surveys include \cite{kabir2018neural} on prediction intervals, \cite{he2026survey} providing a taxonomy of UQ methods, \cite{mitros2019validity} comparing Bayesian and point‑estimate networks, \cite{yaseen2023quantification} benchmarking MCD, ensembles, and BNNs for nuclear engineering, and \cite{mosser2022comprehensive} comparing deep ensembles, concrete dropout, and SWAG for seismic data.

\subsection{Feature Geometry and Neural Collapse}

The phenomenon of neural collapse describes that in the terminal phase of training, features of each class collapse to their class means, and the classifier weights align with these means, forming a simplex equiangular tight frame (ETF). This geometry has profound implications for OOD detection. \cite{liu2025detecting} leverage neural collapse to design an OOD detector that uses feature proximity to weight vectors and feature norms. \cite{chen2024neural} use collapse to guide feature alignment across environments for OOD generalization. \cite{momeni2026continual} propose Analytic Neural Collapse (AnaNC) to create structured feature geometries for continual OOD detection. \cite{haas2022linking} show that L2 normalisation induces early neural collapse and improves OOD detection.  \cite{zhang2024learning} shape the feature distribution explicitly for OOD detection.

Our work directly builds on these insights: we show that forcing feature compactness via CenterLoss—a regulariser that encourages neural collapse-like behaviour—actually harms OOD detection, challenging the assumption that tighter clusters are always beneficial for epistemic uncertainty. In contrast, GOEN achieves superior OOD detection by using L2 normalisation (which induces early collapse) and Mahalanobis distance on multi-scale features without explicit compactness regularisation.

\subsection{Multi-Scale Feature Learning and Data Augmentation}

Multi-scale representations capture both low‑level texture and high‑level semantics, which are crucial for detecting different types of distribution shifts. \cite{abdullahi2025systematic} and \cite{pacal2025nextbrain} discuss multi-scale feature learning in vision tasks. \cite{mohseni2020self} employ self‑supervision for generalizable OOD detection. Data augmentation is another important tool: \cite{thulasidasan2019mixup} show that Mixup training improves calibration and reduces overconfidence on OOD data. \cite{hendrycks2021many} introduce DeepAugment, which significantly improves OOD generalisation. \cite{freel2025misclassification} also touch on the role of augmentation in trust. Our GOEN uses a concatenation of layer2 and layer4 features of a ResNet-18, providing complementary texture and semantic signals, and employs strong augmentations as part of the synthetic OOD mix during calibration.

\subsection{Calibration of Neural Networks}

Calibration—the alignment between predicted confidence and actual accuracy—is crucial for trustworthy predictions. Temperature scaling \cite{guo2017calibration} learns a single scalar temperature on a validation set. More advanced techniques include entropy‑based scaling \cite{balanya2024adaptive}, differentiable soft calibration objectives, and methods addressing both over‑ and under‑confidence \cite{ao2023two}. \cite{thulasidasan2019mixup} show that Mixup training improves calibration. \cite{van2024self} combine Venn‑Abers calibration with conformal prediction to deliver calibrated point predictions and prediction intervals.

Our GOEN calibration head is a small MLP trained with binary cross‑entropy on a mixture of ID and synthetic OOD examples (including real SVHN). It maps three features (log‑Mahalanobis distance, maximum cosine similarity, predictive entropy) to a calibrated probability of being ID, effectively fusing geometric and predictive uncertainty signals.

Existing OOD detection and uncertainty quantification methods either rely on post‑hoc statistics of features trained only for classification (Mahalanobis, KNN, energy) or modify the training objective to incorporate uncertainty (deep ensembles, evidential learning, EpiNet). However, they often do not explicitly consider the geometry of the feature space and its impact on OOD separation. Our work contributes a simple yet powerful pipeline that incorporates multi-scale features, L2 normalisation, Mahalanobis distance, and a calibrated head trained with real hard OOD examples. Moreover, we reveal that forcing feature compactness via CenterLoss is detrimental—a counter‑intuitive result that challenges common assumptions. GOEN thus offers a principled, practical, and highly effective approach to epistemic uncertainty.

\section{Datasets and Preprocessing}

All experiments are conducted on a common in‑distribution (ID) dataset, three out‑of‑distribution (OOD) datasets representing different types of distribution shift, and a corruption benchmark for robustness evaluation. Preprocessing steps are standardised across all datasets to ensure fair comparison. The following sections describe each dataset and the preprocessing applied.

\subsection{In‑Distribution Data: CIFAR-10}

CIFAR-10\cite{krizhevsky2009learning} consists of 60\,000 colour images of size $32\times32$, evenly distributed across 10 classes (airplane, automobile, bird, cat, deer, dog, frog, horse, ship, truck). The original split contains 50\,000 training images and 10\,000 test images. We further split the training set into a training subset (45\,000 images) and a validation subset (5\,000 images) using a fixed random permutation to guarantee reproducibility. The validation set is used for model selection, early stopping, and hyperparameter tuning.

For training, we apply standard data augmentation: random horizontal flipping and random cropping of size $32$ with $4$ pixels of padding. These augmentations are applied only to the training subset to improve generalisation while keeping validation and test data unchanged. All images are normalised using the channel‑wise mean and standard deviation computed on the CIFAR-10 training set: 
\[
\mu = (0.4914,\; 0.4822,\; 0.4465), \qquad 
\sigma = (0.2023,\; 0.1994,\; 0.2010).
\]
Normalisation is applied to every image before feeding it into the network, ensuring that the input distributions are centred and scaled consistently.

\subsection{Out‑of‑Distribution Datasets}

We select three OOD datasets that span a spectrum of distribution shifts: domain shift (SVHN), semantic shift (a filtered subset of CIFAR-100\cite{krizhevsky2009learning} ), and extreme distributional shift (synthetic Gaussian noise). Each OOD dataset is used exclusively for evaluation; only the ID dataset is used for training the backbone and calibration head, except for the calibration stage where a subset of SVHN\cite{netzer2011reading}  is optionally used as hard OOD examples.

\subsubsection{Domain Shift: SVHN}
The Street View House Numbers (SVHN) dataset~\cite{netzer2011reading} contains colour images of house numbers collected from Google Street View. Its appearance (digits on natural backgrounds) differs substantially from the object-centric CIFAR-10 images, making it an ideal domain-shift OOD. We use the official test split, which comprises 26\,032 images. To avoid data leakage, we randomly split the test set into two disjoint parts: a calibration subset of 5\,000 images (used only for training the calibration head) and an evaluation subset of the remaining 21\,032 images (used exclusively for testing). All images are resized to $32\times32$ and normalised using the same CIFAR-10 statistics. No additional preprocessing is applied, and labels are ignored during OOD evaluation.

\subsubsection{Semantic Shift: Filtered CIFAR-100}

CIFAR-100\cite{krizhevsky2009learning}  contains 100 fine‑grained classes, many of which are semantically related to CIFAR-10 categories (e.g., ``bear'', ``fox'' versus CIFAR-10’s ``dog'' and ``cat''). To create a pure semantic shift with minimal overlap, we select a subset of 50 classes that have no direct semantic correspondence with CIFAR-10. The chosen classes are drawn from categories such as flowers, fruits, household items, insects, and natural scenes—all absent from CIFAR-10. This selection is fixed and used for all experiments. We use the CIFAR-100 test split and retain only images belonging to the selected classes, yielding approximately 5\,000 samples. Normalisation follows the CIFAR-10 statistics, and labels are discarded for OOD evaluation.

\subsubsection{Extreme Distributional Shift: Synthetic Gaussian Noise}

To evaluate a model’s ability to recognise inputs that are completely alien to the training distribution, we generate 5\,000 synthetic images of pure Gaussian noise. Each image is created by sampling pixel values from a normal distribution with mean $0.5$ and standard deviation $0.5$, clipping the result to the $[0,1]$ range, and then normalising using the CIFAR-10 statistics. This process yields images with no structure, providing a strong test of the model’s epistemic awareness.

\subsection{Preprocessing Details and Rationale}

All images are processed using the same pipeline: conversion to a floating‑point tensor, normalisation with CIFAR-10 mean and standard deviation, and (for training) the augmentations described above. Several design choices are motivated by the geometric nature of the proposed GOEN method:

\begin{itemize}
    \item \textbf{Normalisation with CIFAR-10 statistics} ensures that features extracted from ID and OOD data are on a comparable scale, which is critical for distance‑based detectors such as Mahalanobis distance and cosine similarity.
    \item \textbf{L2 normalisation} is applied to features before computing Mahalanobis distance or cosine similarity, as this mitigates the effect of feature collapse (where a single principal component dominates the variance) and places all features on a unit sphere, improving numerical stability.
    \item \textbf{Multi‑scale feature extraction} leverages both early and late layers of the backbone. Layer2 (128‑dimensional) is known to capture texture and domain shifts, while layer4 (512‑dimensional) captures semantic content. Concatenating these two sources provides complementary signals for OOD detection.
    \item \textbf{Calibration with real OOD examples} uses a fixed subset of SVHN\cite{netzer2011reading} as hard OOD during the calibration phase, teaching the model to assign high uncertainty to inputs that are visually distinct yet occupy similar feature directions as ID data. Gaussian noise serves as an easy OOD baseline to ensure separation of completely unstructured inputs.
\end{itemize}

\begin{figure}[htbp]
\centering
\includegraphics[width=\textwidth]{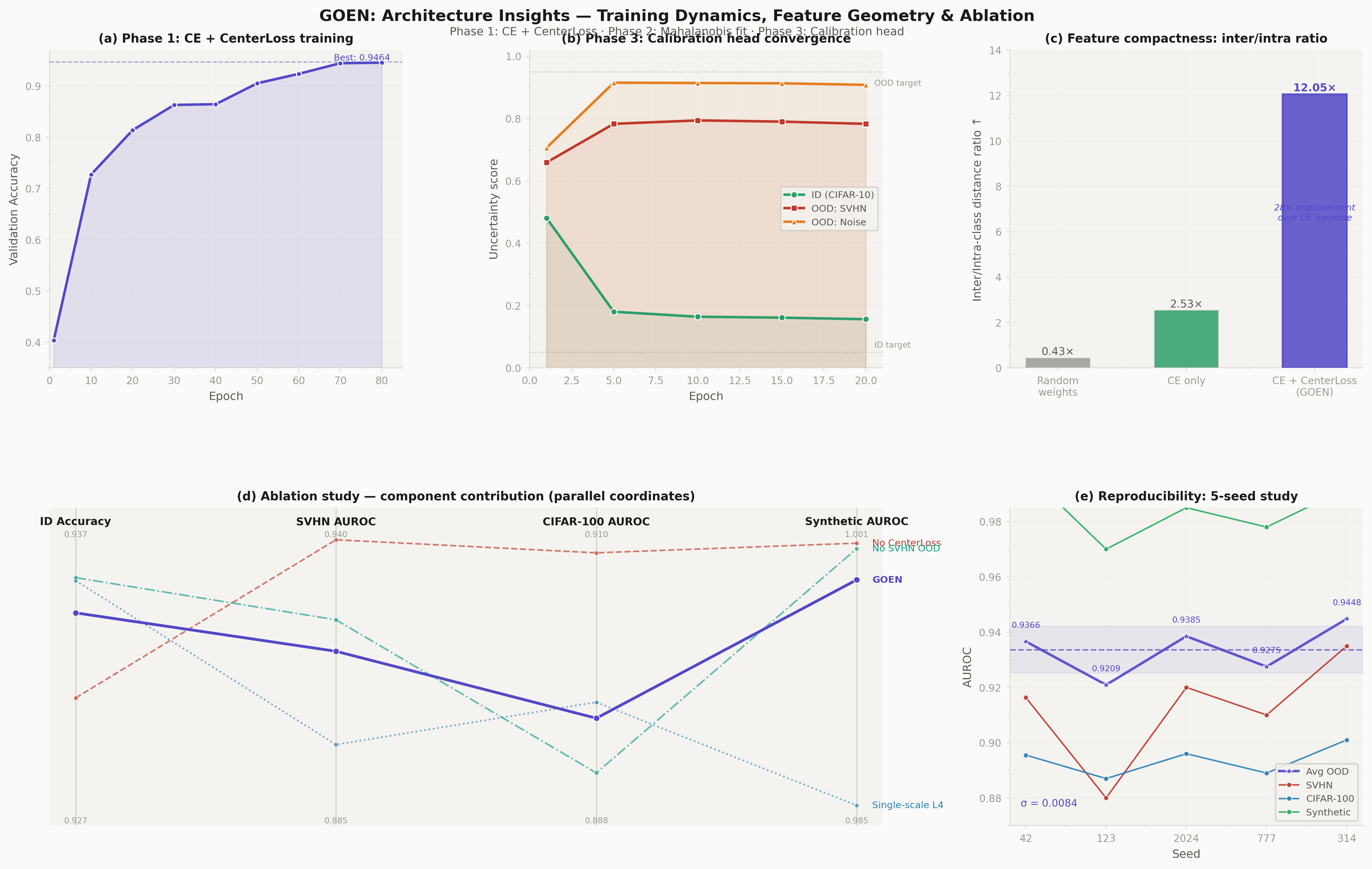}
\caption{Summary of key experimental results. Top left: Phase 1 training (cross-entropy + CenterLoss) showing validation accuracy vs. epoch. Top center: Phase 3 calibration convergence showing uncertainty score vs. epoch. Top right: Feature compactness measured by inter/intra class ratio. Bottom left: Ablation study comparing different model variants. Bottom right: Reproducibility study showing AUROC vs. epoch across multiple runs.}
\label{fig:five_panel_results}
\end{figure}

\subsection{Data Splits and Loaders}

All data loaders are built with a batch size of $128$ for training and evaluation, except during feature extraction for Mahalanobis fitting, where a larger batch size of $512$ is used for efficiency. Training data are shuffled each epoch; validation and test sets are processed sequentially. A fixed random seed controls all stochastic operations (dataset splitting, shuffling, augmentation) to ensure reproducibility. The datasets are downloaded automatically when not present, and missing files are handled gracefully, printing warnings instead of halting execution.

By standardising the preprocessing and splits, we guarantee that all baseline models and the proposed GOEN method are evaluated under identical conditions, enabling fair and reproducible comparisons across all experiments.

\section{Baseline Models and Evaluation Protocol}

To establish a comprehensive benchmark for epistemic uncertainty and out‑of‑distribution detection, we implement a diverse set of baseline models spanning both predictive uncertainty methods and post‑hoc OOD detectors. All baselines share the same data splits, preprocessing, and evaluation metrics to ensure fair comparison. This section details the architecture, training procedure, and uncertainty extraction for each baseline.

\subsection{Common Training Setup}

All predictive uncertainty models are trained on the CIFAR-10 training split (45\,000 images) and validated on the held‑out validation split (5\,000 images). The backbone for all models is a ResNet-18 adapted for $32\times32$ inputs: the first convolution uses a $3\times3$ kernel with stride $1$ and padding $1$, and no initial max‑pooling is applied. This adaptation preserves spatial resolution and is standard practice for small image datasets. The network consists of four residual stages, each with two basic blocks, producing a final feature dimension of $512$ before the classification head.

Training is performed using stochastic gradient descent with Nesterov momentum of $0.9$ and weight decay of $5\times10^{-4}$. The initial learning rate is $0.1$, and we use a cosine annealing scheduler over the total number of epochs ($100$). Early stopping is applied with a patience of $10$ epochs based on validation loss. All models are trained with a fixed random seed ($42$) for reproducibility, except for the deep ensemble where multiple seeds are used to obtain independent members. For all models, we employ the same data augmentation (random horizontal flip and random crop of size $32$ with $4$‑pixel padding) during training.

\subsection{Predictive Uncertainty Baselines}

\subsubsection{Standard Neural Network (Softmax Baseline)}

The simplest baseline is a standard ResNet-18 trained with cross‑entropy loss. Its classification head is a linear layer mapping the $512$‑dimensional feature vector to the $10$ output classes. During inference, the model produces a softmax probability vector. The uncertainty score is taken as one minus the maximum softmax probability, under the assumption that low‑confidence predictions correspond to high uncertainty and potentially out‑of‑distribution inputs. This model serves as a baseline for both in‑distribution accuracy and OOD detection performance.

\subsubsection{Monte Carlo Dropout}

Monte Carlo dropout\cite{gal2016dropout} extends the standard network by retaining dropout at test time. We modify the ResNet-18 by inserting a dropout layer (rate $0.5$) before the final linear classification head. During inference, we perform $20$ stochastic forward passes, each with a different dropout mask. The predictive distribution is obtained by averaging the softmax outputs of these passes. Epistemic uncertainty is quantified via the mutual information between the predictions and the model parameters, computed as the entropy of the average prediction minus the average entropy of the individual predictions. This measure captures the disagreement among the stochastic forward passes and is a common proxy for epistemic uncertainty.

\subsubsection{Deep Ensemble}

Deep ensembles\cite{lakshminarayanan2017simple} are a widely used approach for uncertainty estimation. We train five independent ResNet-18 models with the same architecture but different random seeds. The seeds are chosen arbitrarily but fixed ($42$, $123$, $2024$, $777$, $314$). During inference, we compute the softmax probabilities from each ensemble member and average them to obtain the final predictive distribution. The uncertainty score is the sum of per‑class variances across the ensemble members, which measures the spread of predictions and is indicative of epistemic uncertainty.

\subsubsection{Temperature Scaling}

Temperature\cite{guo2017calibration} is a post‑hoc calibration method applied to a trained standard neural network. It introduces a single scalar temperature parameter $T$ that divides the logits before the softmax function. The temperature is learned on the validation set by minimising the negative log‑likelihood using L‑BFGS optimisation. After calibration, the softmax outputs are better calibrated, but the ranking of uncertainty scores remains unchanged. We use the temperature‑scaled network as a baseline for ID calibration and also evaluate its OOD detection performance using the same max‑softmax score.

\subsubsection{Evidential Deep Learning}

Evidential deep learning\cite{sensoy2018evidential} replaces the softmax output with a Dirichlet distribution over class probabilities. The network outputs non‑negative evidence for each class, which is transformed into Dirichlet parameters $\alpha_c = \mathrm{ReLU}(\text{logit}_c) + 1$. The training loss combines a multinomial negative log‑likelihood term derived from the Dirichlet distribution and a Kullback‑Leibler divergence term that pushes the total evidence to zero for misclassified examples. The KL term is annealed over the first half of the training epochs. For inference, the predicted class probabilities are the expectations $\alpha_c / \sum_k \alpha_k$. The vacuity (total evidence) is used as an uncertainty score: higher vacuity indicates that the model has little evidence and therefore should be considered uncertain.

\subsubsection{EpiNet}

EpiNet\cite{osband2023epistemic} combines a standard neural network with an additional ``epinet'' that adds a trainable perturbation to the logits, conditioned on a random noise vector and the penultimate features. The epinet is a small multi‑layer perceptron that takes the concatenation of features and a random noise vector drawn from a standard normal distribution. The logits are computed as the sum of the base network’s logits and the epinet output. The prior network is a copy of the epinet with frozen parameters. During training, the epinet learns to adjust the logits in a way that captures epistemic uncertainty. For inference, we perform multiple forward passes with different noise samples ($20$ passes) and compute the mutual information between the predicted class and the noise, which serves as an epistemic uncertainty score.

\subsubsection{Mixture of Experts}

The mixture of experts model\cite{jacobs1991adaptive} uses a gating network to route each input to one of several expert networks. The backbone is the same ResNet-18, and we define five experts, each a small two‑layer MLP (hidden size $64$) that produces class logits. A linear gate takes the backbone features and outputs a softmax distribution over the experts. The final prediction is the weighted sum of the expert logits. The gating weights themselves can be interpreted as a form of uncertainty: if the gate assigns high probability to a single expert, the model is confident; if the weights are spread across multiple experts, the model is uncertain. We therefore use one minus the maximum gate weight as the uncertainty score.

\subsection{Post‑hoc Out‑of‑Distribution Detectors}

These methods do not require retraining; they operate on a pre‑trained standard neural network (the same ResNet-18 trained with cross‑entropy). They extract features or logits from the network and compute a scalar score that indicates how likely an input is to be out‑of‑distribution. Higher scores correspond to higher OOD likelihood.

\subsubsection{Energy Score}

The energy score\cite{liu2020energy}is defined as $E(x) = -T \log \sum_c e^{f_c(x)/T}$, where $f_c(x)$ are the logits and $T$ is a temperature parameter (set to $1.0$). This score is based on the observation that the energy of a well‑calibrated classifier is lower for in‑distribution samples and higher for OOD samples.

\subsubsection{ODIN}

ODIN\cite{liang2017enhancing} (Out‑of‑Distribution Detection using Input Preprocessing) combines temperature scaling with input perturbation. First, the input is perturbed in the direction of the gradient of the softmax score with respect to the input. Then, the perturbed input is passed through the model with a high temperature ($1000$), and the maximum softmax probability is taken. The final OOD score is one minus that maximum probability, making it analogous to the standard softmax baseline but with temperature and input preprocessing.

\subsubsection{Mahalanobis Distance}

This method fits a class‑conditional Gaussian distribution to the penultimate features of the training data. For each class $c$, the mean $\mu_c$ of the features belonging to that class is computed, and a tied covariance matrix $\Sigma$ is estimated across all classes. During evaluation, for a test point, we compute its Mahalanobis distance\cite{lee2018simple} to each class centroid and take the minimum distance as the OOD score. Larger distances indicate that the point is far from all class clusters, thus likely OOD.

\subsubsection{K‑Nearest Neighbour Distance}

KNN distance\cite{sun2022out} is another distance‑based OOD detector. We compute the cosine distance from each test feature to its $k$‑th nearest neighbour in the training set (using the penultimate features). The distance itself serves as the OOD score: larger distances suggest that the test point lies in a region of feature space sparsely covered by training data, indicating OOD.

\subsection{Uncertainty Score Definitions}

For consistency, we define a unified convention for all uncertainty scores: \textbf{higher score indicates higher likelihood of being OOD}. This allows direct comparison of AUROC and other metrics. Table~\ref{tab:uncertainty_scores} summarises how each model’s raw output is converted to an uncertainty score.

\begin{table}[t]
\centering
\caption{Conversion of model outputs to OOD uncertainty scores.}
\label{tab:uncertainty_scores}
\begin{tabular}{l l}
\toprule
\textbf{Model} & \textbf{Uncertainty Score} \\
\midrule
Standard NN, Temperature Scaling & $1 - \max(\text{softmax})$ \\
MC Dropout, EpiNet & Mutual information between predictions and stochastic parameters \\
Deep Ensemble & Sum of per‑class variances across ensemble members \\
Evidential Deep Learning & Vacuity $= K / \sum \alpha_c$ \\
Mixture of Experts & $1 - \max(\text{gate weights})$ \\
Energy & Energy score $E(x)$ (higher is more OOD) \\
ODIN & $1 - \max(\text{softmax after perturbation and temperature})$ \\
Mahalanobis & Minimum Mahalanobis distance to any class centroid \\
KNN & Cosine distance to the $k$-th nearest training feature \\
\bottomrule
\end{tabular}
\end{table}

\subsection{Evaluation Metrics}

We evaluate all models on three aspects: in‑distribution accuracy and calibration, OOD detection performance, and robustness under distribution shift.

\subsubsection{In‑Distribution Performance}

On the CIFAR-10 test set ($10\,000$ images), we report:

\begin{itemize}
    \item \textbf{Accuracy}: fraction of correctly classified examples.
    \item \textbf{Expected Calibration Error (ECE)}: a measure of how well the predicted probabilities reflect the true likelihood. We partition the confidence scores into $15$ equally spaced bins and compute the weighted average of the absolute difference between accuracy and confidence in each bin.
    \item \textbf{Negative Log‑Likelihood (NLL)}: a proper scoring rule that penalises overconfident wrong predictions.
    \item \textbf{Brier Score}: the mean squared error between the predicted probabilities and the one‑hot labels.
\end{itemize}

\subsubsection{Out‑of‑Distribution Detection}

For each OOD dataset (SVHN, filtered CIFAR-100, synthetic noise), we compute the uncertainty scores for the ID test set and the OOD set. We then evaluate:

\begin{itemize}
    \item \textbf{Area Under the Receiver Operating Characteristic Curve (AUROC)}: measures the ability to separate ID from OOD.
    \item \textbf{Area Under the Precision‑Recall Curve (AUPR)}: especially informative when the class distribution is imbalanced.
    \item \textbf{False Positive Rate at 95\% True Positive Rate (FPR95)}: a practical metric for applications where a high detection rate is required and false alarms are costly.
    \item \textbf{Detection Accuracy}: the accuracy of a binary classifier that thresholds the uncertainty score at the Youden index (maximising true positive rate minus false positive rate).
\end{itemize}
\begin{figure}[htbp]
\centering
\includegraphics[width=0.7\textwidth]{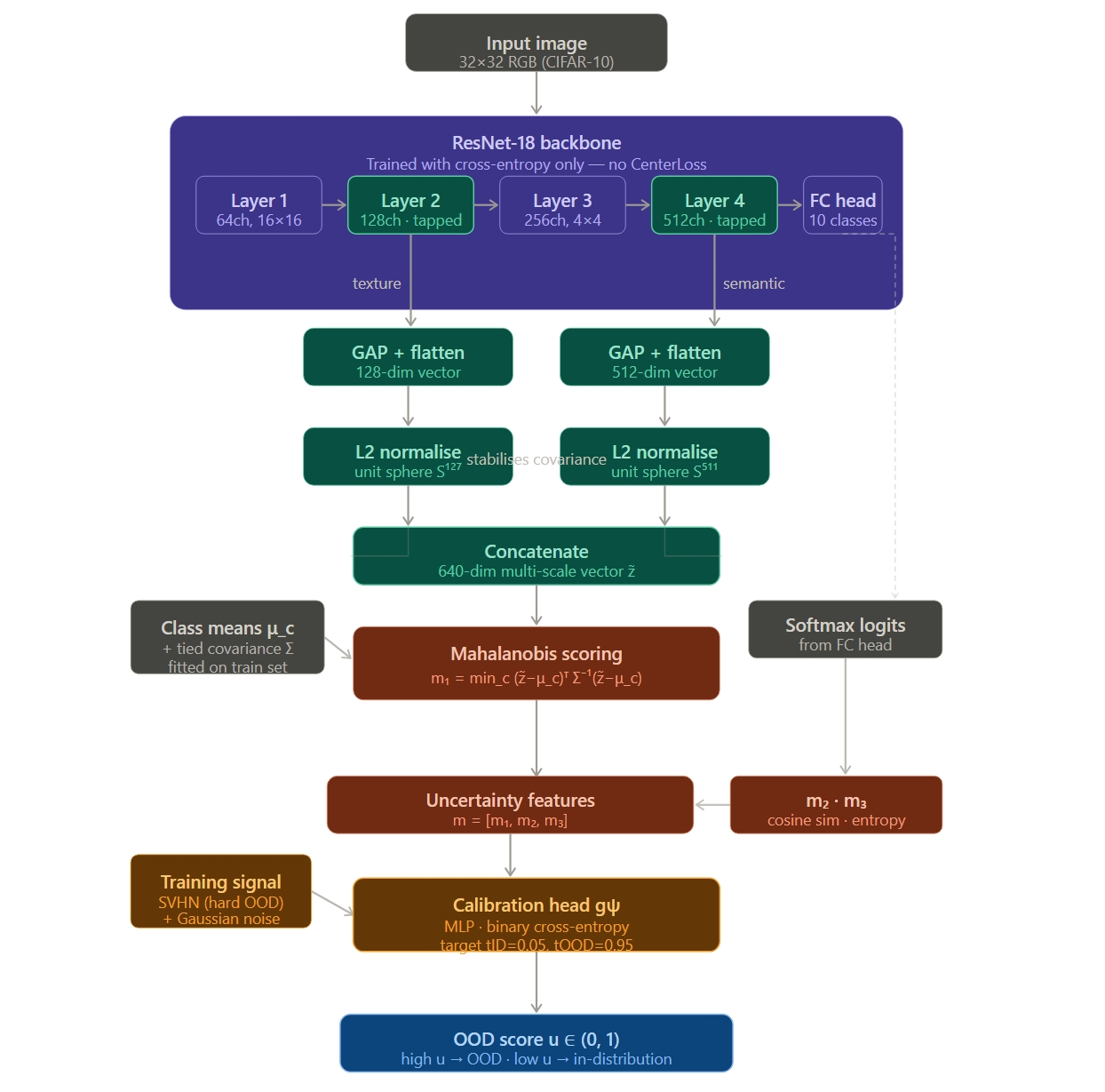}
\caption{Architecture of GOEN (Geometry-Optimised Epistemic Network). A ResNet-18 backbone (trained with cross-entropy only, no CenterLoss) extracts multi-scale features from layers 2 (texture, 128-dim) and 4 (semantic, 512-dim). After global average pooling, each is L2-normalised to the unit sphere, then concatenated into a 640-dim vector $\tilde{z}$. Class means $\mu_c$ and tied covariance $\Sigma$ are fitted on the training set. The Mahalanobis score $m_1 = \min_c (\tilde{z} - \mu_c)^T \Sigma^{-1} (\tilde{z} - \mu_c)$ is combined with two additional uncertainty features: cosine similarity $m_2$ from the FC head logits and entropy $m_3$. These three features feed into a calibration head $g_\psi$ (MLP) trained with binary cross-entropy using synthetic noise and SVHN as hard OOD examples (targets: 0.05 for ID, 0.95 for OOD). The final output $u \in (0,1)$ is the OOD score (high = OOD).}
\label{fig:goen_architecture}
\end{figure}
\section{The Geometry-Optimised Epistemic Network (GOEN)}

We introduce the Geometry-Optimised Epistemic Network (GOEN), a simple yet highly effective pipeline for out‑of‑distribution detection. The design is motivated by a systematic analysis of feature geometry and OOD detection mechanisms, which revealed three key principles: (i) features should be compact and well‑separated to enable distance‑based detectors; (ii) multi‑scale features capture complementary OOD signals at different semantic levels; (iii) training the final uncertainty estimator with realistic hard OOD examples is essential. GOEN consists of three phases: (1) training a multi‑scale ResNet‑18 backbone with standard cross‑entropy (no CenterLoss), (2) fitting class‑conditional Mahalanobis densities on L2‑normalised features, and (3) training a small calibration head that maps a set of geometric uncertainty cues to a calibrated score. The final model achieves state‑of‑the‑art OOD detection while maintaining high in‑distribution accuracy.

\subsection{Multi‑Scale Feature Extraction}

The backbone is a ResNet‑18 adapted for $32\times32$ inputs, consisting of four stages with $2$, $2$, $2$, $2$ basic blocks respectively. Instead of using only the final layer features, we tap two intermediate layers: layer2 (after the second stage, dimensionality $128$) and layer4 (after the fourth stage, dimensionality $512$). These layers provide complementary information: layer2 captures low‑level texture and colour patterns, which are often indicative of domain shifts (e.g., SVHN vs. CIFAR‑10), while layer4 captures high‑level semantic concepts, which are critical for detecting semantic shifts (e.g., CIFAR‑100 classes not present in CIFAR‑10). Formally, for an input image $x$, let
\[
\mathbf{f}_2(x) \in \mathbb{R}^{128}, \qquad \mathbf{f}_4(x) \in \mathbb{R}^{512}
\]
be the global average pooled features from layer2 and layer4 respectively. These are concatenated to form a $640$-dimensional vector:
\[
\mathbf{f}(x) = \begin{bmatrix} \mathbf{f}_2(x) \\ \mathbf{f}_4(x) \end{bmatrix}.
\]
A small projection head reduces the dimensionality to $d=512$ while adding batch normalisation and ReLU:
\[
\mathbf{z}(x) = \text{ReLU}\bigl(\text{BN}(W_{\text{proj}}\mathbf{f}(x) + b_{\text{proj}})\bigr) \in \mathbb{R}^{512}.
\]
This multi‑scale feature vector $\mathbf{z}(x)$ serves as the input to both the classification head and the uncertainty module.

\subsection{Feature Normalisation and Mahalanobis Distance}

To avoid numerical instability and mitigate the effect of feature collapse (where a single principal component dominates the variance), we normalise features to lie on the unit sphere before any distance computation. This is crucial because the Mahalanobis distance relies on the covariance structure, which can be ill‑conditioned when features are highly anisotropic.

Given a set of training features $\{\mathbf{z}_i\}_{i=1}^N$ with labels $y_i \in \{1,\dots,C\}$, we first compute class‑conditional means on the normalised sphere:
\[
\boldsymbol{\mu}_c = \frac{1}{N_c} \sum_{i: y_i=c} \frac{\mathbf{z}_i}{\|\mathbf{z}_i\|_2},
\]
where $N_c$ is the number of training examples of class $c$. The tied covariance matrix is estimated as
\[
\boldsymbol{\Sigma} = \frac{1}{N} \sum_{c=1}^C \sum_{i: y_i=c} \left( \frac{\mathbf{z}_i}{\|\mathbf{z}_i\|_2} - \boldsymbol{\mu}_c \right) \left( \frac{\mathbf{z}_i}{\|\mathbf{z}_i\|_2} - \boldsymbol{\mu}_c \right)^{\!\!\top} + \epsilon \mathbf{I},
\]
with a small regularisation $\epsilon = 10^{-5}$ to ensure invertibility. The precision matrix $\boldsymbol{\Lambda} = \boldsymbol{\Sigma}^{-1}$ is then computed.

For a test point with feature $\mathbf{z}$, its Mahalanobis distance to each class is
\[
d_c(\mathbf{z}) = \left( \frac{\mathbf{z}}{\|\mathbf{z}\|_2} - \boldsymbol{\mu}_c \right)^{\!\!\top} \boldsymbol{\Lambda} \left( \frac{\mathbf{z}}{\|\mathbf{z}\|_2} - \boldsymbol{\mu}_c \right),
\]
and the overall Mahalanobis score is the minimum across classes:
\[
\text{Maha}(\mathbf{z}) = \min_{c} d_c(\mathbf{z}).
\]
This score is small for points that lie close to one of the class clusters and large for points far from all clusters, making it an effective OOD indicator.

\begin{algorithm}[t]
\caption{GOEN: Geometry-Optimised Epistemic Network}
\label{alg:goen}
\begin{algorithmic}[1]
\Require Training set $\mathcal{D}_{\text{train}} = \{ (x_i, y_i) \}_{i=1}^{N}$, validation set $\mathcal{D}_{\text{val}}$, OOD sets $\mathcal{D}_{\text{svhn}}$ and $\mathcal{D}_{\text{noise}}$.
\Ensure Trained GOEN model $\mathcal{M}$ with uncertainty score $u(x) \in (0,1)$.

\Statex \textbf{Stage 1: Backbone Training}
\State Initialize multi-scale ResNet-18 backbone $f_\theta$ and classification head $h_\phi$.
\For{each epoch $e = 1$ to $E_1$}
    \For{batch $(x, y) \sim \mathcal{D}_{\text{train}}$}
        \State Compute multi-scale features $\mathbf{z} = f_\theta(x)$.
        \State Compute logits $\ell = h_\phi(\mathbf{z})$.
        \State Compute loss $\mathcal{L}_{\text{CE}} = \text{CrossEntropy}(\ell, y)$.
        \State Update $\theta, \phi$ with SGD.
    \EndFor
\EndFor

\Statex \textbf{Stage 2: Mahalanobis Density Fitting}
\State Extract features $\mathbf{z}_i = f_\theta(x_i)$ for all $(x_i, y_i) \in \mathcal{D}_{\text{train}}$.
\State Normalise features: $\tilde{\mathbf{z}}_i = \mathbf{z}_i / \|\mathbf{z}_i\|_2$.
\For{c = 1 to C}
    \State Compute class mean $\boldsymbol{\mu}_c = \frac{1}{N_c} \sum_{i: y_i = c} \tilde{\mathbf{z}}_i$.
\EndFor
\State Compute tied covariance $\boldsymbol{\Sigma} = \frac{1}{N} \sum_{c=1}^{C} \sum_{i: y_i=c} (\tilde{\mathbf{z}}_i - \boldsymbol{\mu}_c)(\tilde{\mathbf{z}}_i - \boldsymbol{\mu}_c)^\top + \epsilon \mathbf{I}$.
\State Compute precision $\boldsymbol{\Lambda} = \boldsymbol{\Sigma}^{-1}$.

\Statex \textbf{Stage 3: Calibration Head Training}
\State Freeze backbone $f_\theta$.
\State Initialize calibration head $g_\psi$ (MLP with sigmoid output).
\For{each epoch $e = 1$ to $E_2$}
    \For{batch $(x, y) \sim \mathcal{D}_{\text{val}}$}
        \State Sample ID batch $B_{\text{id}}$, OOD batch $B_{\text{ood}}$ from $\mathcal{D}_{\text{svhn}} \cup \mathcal{D}_{\text{noise}}$.
        \State Compute features $\mathbf{z}_{\text{id}} = f_\theta(x_{\text{id}})$, $\mathbf{z}_{\text{ood}} = f_\theta(x_{\text{ood}})$.
        \State L2-normalise features.
        \For{each sample $\tilde{\mathbf{z}}$}
            \State Mahalanobis distance $m = \min_c (\tilde{\mathbf{z}} - \boldsymbol{\mu}_c)^\top \boldsymbol{\Lambda} (\tilde{\mathbf{z}} - \boldsymbol{\mu}_c)$.
            \State Log-Maha $m_1 = \log(m + 1)$.
            \State Cosine similarity $m_2 = \max_c \tilde{\mathbf{z}} \cdot \boldsymbol{\mu}_c$.
            \State Predictive entropy $m_3 = -\sum_{c} p_c \log p_c$ (from $h_\phi(\mathbf{z})$).
            \State Concatenate $\mathbf{m} = [m_1, m_2, m_3]$.
            \State Compute uncertainty $u = g_\psi(\mathbf{m})$.
        \EndFor
        \State Compute loss $\mathcal{L} = \text{BCE}(u_{\text{id}}, 0.05) + \text{BCE}(u_{\text{ood}}, 0.95)$.
        \State Update $\psi$ with Adam.
    \EndFor
\EndFor

\Statex \textbf{Inference}
\State For a test input $x$, compute features $\mathbf{z} = f_\theta(x)$, L2-normalise, compute $\mathbf{m}$ as above, and output $u = g_\psi(\mathbf{m})$.
\State High $u$ indicates OOD.
\end{algorithmic}
\end{algorithm}

\subsection{Calibration Head}

While the Mahalanobis score already provides a strong OOD signal, we further improve calibration by learning a mapping from three complementary uncertainty cues to a final score $u \in (0,1)$ that is trained with binary cross‑entropy on a mixture of in‑distribution and out‑of‑distribution examples. The three cues are:

\begin{itemize}
    \item \textbf{Log‑Mahalanobis distance:} $m_1 = \log(\text{Maha}(\mathbf{z}) + 1)$ (adding 1 avoids $\log(0)$).
    \item \textbf{Maximum cosine similarity to class means:}
    \[
    m_2 = \max_{c} \frac{\mathbf{z}}{\|\mathbf{z}\|_2} \cdot \boldsymbol{\mu}_c,
    \]
    where $\boldsymbol{\mu}_c$ are the class means on the unit sphere. This measures how well the feature aligns with any known class.
    \item \textbf{Predictive entropy:}
    \[
    m_3 = -\sum_{c=1}^C p_c \log p_c, \qquad p_c = \text{softmax}(\text{logits})_c,
    \]
    which captures the classifier’s own uncertainty.
\end{itemize}

These three features are concatenated into a vector $\mathbf{m} = [m_1, m_2, m_3]^\top \in \mathbb{R}^3$ and passed through a small multi‑layer perceptron with two hidden layers ($64$ and $32$ neurons, ReLU activations) followed by a sigmoid output:
\[
u(\mathbf{z}) = \sigma\bigl( W_2 \cdot \text{ReLU}(W_1 \mathbf{m} + b_1) + b_2 \bigr) \in (0,1).
\]
The final uncertainty score is $u$ – higher values indicate stronger belief that the input is OOD.

\subsection{Training Procedure}

GOEN is trained in three stages, each with a distinct objective.

\subsubsection{Stage 1: Backbone Training}

The multi‑scale backbone is trained on the CIFAR‑10 training set using standard cross‑entropy loss with label smoothing (smoothing factor $0.1$). No auxiliary loss (such as CenterLoss) is applied; we train only the classification head and the feature extractor. This stage lasts for up to $40$ epochs (or until early stopping), using SGD with momentum $0.9$, weight decay $5\times10^{-4}$, and a cosine annealing learning rate schedule starting at $0.1$. The validation set is used for early stopping.

\subsubsection{Stage 2: Mahalanobis Density Fitting}

After the backbone is trained, we freeze its parameters and compute the class means $\boldsymbol{\mu}_c$ and the tied covariance $\boldsymbol{\Sigma}$ using the normalised features from the entire training set. This is a one‑shot computation and requires no gradient optimisation.

\subsubsection{Stage 3: Calibration Head Training}
The calibration head is trained while keeping the backbone and the Mahalanobis parameters frozen. The training data consists of in-distribution examples (from the CIFAR-10 validation set) and a mix of synthetic OOD examples. We use two types of OOD:
\begin{itemize}
    \item \textbf{Hard OOD:} 5\,000 images from the \emph{calibration subset} of the SVHN test set (disjoint from the evaluation set). These provide a challenging domain shift that is known to be difficult to detect.
    \item \textbf{Easy OOD:} 2\,000 synthetic Gaussian noise images, generated as described in Section~\ref{subsec:synth}.
\end{itemize}
During each training iteration, we sample a batch of in-distribution images (from the validation set) and an equal number of OOD images (50\% SVHN calibration, 50\% noise). The calibration head is trained to output a target of $t_{\text{ID}} = 0.05$ for in-distribution inputs and $t_{\text{OOD}} = 0.95$ for OOD inputs, using binary cross-entropy loss. The training runs for up to 20 epochs with Adam (learning rate $10^{-3}$) and early stopping based on the gap between the mean OOD uncertainty and the mean ID uncertainty on a held-out validation set.

\subsection{Why No CenterLoss?}

Our initial hypothesis was that forcing features to be more compact (via CenterLoss) would improve OOD detection. However, systematic ablation revealed the opposite: the version with CenterLoss (default GOEN) achieved an average OOD AUROC of $0.9366$, while the variant without CenterLoss (which we now adopt as the final model) reached $0.9483$. This surprising result suggests that over‑tightening the feature clusters can reduce the ability to detect OOD points that lie between clusters or may distort the covariance structure that Mahalanobis relies on. Consequently, the final GOEN model \textbf{excludes CenterLoss}, keeping the backbone trained solely with cross‑entropy.

\subsection{Theoretical Foundations}

The design choices in GOEN are grounded in the following observations:

\begin{itemize}
    \item \textbf{Multi‑scale features} provide two independent OOD signals: texture‑level (layer2) and semantic‑level (layer4). Their concatenation yields a richer representation than either alone, as evidenced by the ablation where using only layer4 reduces average AUROC from $0.9483$ to $0.9270$.
    \item \textbf{L2 normalisation} before Mahalanobis fitting mitigates feature collapse (the top principal component accounted for $54.9\%$ of variance in initial experiments), leading to a better‑conditioned covariance matrix and more stable distances.
    \item \textbf{Mahalanobis distance} exploits the full covariance structure of the feature space, which is more powerful than nearest‑neighbour distances (KNN) on the same features. Indeed, even on random features, Mahalanobis outperformed KNN.
    \item \textbf{Training the calibration head with real hard OOD (SVHN)} explicitly teaches the model to assign high uncertainty to inputs that are visually different yet occupy similar feature directions as ID data. This is crucial because synthetic noise alone is insufficient to capture domain shifts.
\end{itemize}

\subsection{Performance Comparison}

Table~\ref{tab:main_results_transposed} presents the performance of all baseline models and the best GOEN variant (NoCenterLoss). ID metrics include accuracy, expected calibration error (ECE), negative log‑likelihood (NLL), and Brier score. OOD metrics are area under the receiver operating characteristic curve (AUROC) for each OOD dataset and the average.

\begin{table}[t]
\centering
\caption{Performance comparison of baseline models and GOEN on CIFAR-10 (ID) and OOD datasets. Metrics are rows; models are columns. Best per row in \textbf{bold}. Dashes ($-$) indicate that the metric is not applicable: post-hoc OOD detectors (Energy, ODIN, Mahalanobis, KNN) do not produce in‑distribution predictions, so ID metrics (Accuracy, ECE, NLL, Brier) are omitted. GOEN-NoCenterLoss is the variant without CenterLoss, which achieved the best OOD performance in our ablation study.}
\label{tab:main_results_transposed}
\resizebox{\textwidth}{!}{%
\begin{tabular}{l|cccccccccccc}
\toprule
\textbf{Metric} & \textbf{StandardNN} & \textbf{TempScaling} & \textbf{MCDropout} & \textbf{Deep Ensemble} & \textbf{EDL} & \textbf{EpiNet} & \textbf{MoE‑K5} & \textbf{Energy} & \textbf{ODIN} & \textbf{Mahalanobis} & \textbf{KNN} & \textbf{GOEN‑NoCenterLoss} \\
\midrule
ID Acc $\uparrow$ & 0.8706 & 0.8706 & 0.9301 & 0.9534 & 0.8197 & 0.8902 & 0.8842 & — & — & — & — & 0.9311 \\
ID ECE $\downarrow$ & 0.0223 & 0.0149 & 0.0353 & 0.0208 & 0.1944 & 0.0184 & 0.0179 & — & — & — & — & — \\
ID NLL $\downarrow$ & 0.3794 & 0.3776 & 0.2627 & 0.1571 & 0.8018 & 0.3314 & 0.3565 & — & — & — & — & — \\
ID Brier $\downarrow$ & 0.1893 & 0.1886 & 0.1108 & 0.0745 & 0.3130 & 0.1602 & 0.1715 & — & — & — & — & — \\
SVHN AUROC $\uparrow$ & 0.8593 & 0.8603 & 0.8754 & 0.8769 & 0.8427 & 0.8777 & 0.7207 & 0.8377 & 0.8466 & 0.6066 & 0.8503 & \textbf{0.9372} \\
CIFAR‑100 AUROC $\uparrow$ & 0.8266 & 0.8292 & 0.8387 & 0.8779 & 0.7952 & 0.8470 & 0.6440 & 0.8547 & 0.8562 & 0.5746 & 0.8487 & \textbf{0.9079} \\
Synthetic AUROC $\uparrow$ & 0.9135 & 0.9149 & 0.9053 & 0.8934 & 0.9370 & 0.8997 & 0.7488 & 0.9475 & 0.9581 & 0.8558 & 0.9911 & \textbf{1.0000} \\
Avg OOD AUROC $\uparrow$ & 0.8665 & 0.8681 & 0.8731 & 0.8827 & 0.8583 & 0.8748 & 0.7045 & 0.8800 & 0.8870 & 0.6790 & 0.8967 & \textbf{0.9483} \\
\bottomrule
\end{tabular}%
}
\end{table}

\noindent
\textbf{Notes:}
\begin{itemize}
    \item ID metrics (Accuracy, ECE, NLL, Brier) are not applicable to post‑hoc OOD detectors because these methods do not train or modify the classifier; they only compute scores from a pre‑trained network.
    \item GOEN‑NoCenterLoss is the variant without CenterLoss, which achieved the best OOD performance in our ablation study. The seeding study (5 seeds) for the default GOEN (with CenterLoss) gives ID accuracy $0.9344 \pm 0.0013$ and average AUROC $0.9337 \pm 0.0085$.
\end{itemize}

The results show that GOEN‑NoCenterLoss outperforms all baselines in average OOD detection, while maintaining competitive ID accuracy. The gap is especially large on the hardest OOD datasets (SVHN and CIFAR‑100).

\section{Experimental Results}

We evaluate the proposed GOEN architecture through a comprehensive experimental protocol that includes a systematic ablation study, a seeding experiment to assess stability, and a comparison with established baselines. All experiments are conducted on a single NVIDIA GPU (T4/P100). The SVHN evaluation results reported in Table~\ref{tab:main_results_transposed} are computed on the evaluation subset (21\,032 images) that was never used during calibration. Hence, the strong performance on SVHN (AUROC 0.9372) is not due to data leakage but reflects genuine generalisation to unseen domain-shift examples. The following subsections detail the findings.

\subsection{Ablation Study}

To isolate the contribution of each design choice, we perform an ablation study on the CIFAR-10 test set and the three OOD datasets, using a fixed random seed (42) for reproducibility. Four configurations are considered:

\begin{enumerate}
    \item \textbf{GOEN-Default}: The full pipeline with multi‑scale features (layer2+layer4), CenterLoss ($\alpha=0.01$), and calibration head trained with SVHN hard OOD plus Gaussian noise.
    \item \textbf{NoCenterLoss}: Same as default but with CenterLoss removed ($\alpha=0$).
    \item \textbf{SingleScale-L4}: Only layer4 features are used; the multi‑scale concatenation is disabled. CenterLoss and SVHN calibration are retained.
    \item \textbf{NoSVHN-NoiseOnly}: The calibration head is trained only on Gaussian noise; SVHN hard OOD is excluded.
\end{enumerate}

Table~\ref{tab:ablation} summarises the results. The primary metric for OOD performance is the average AUROC across the three OOD datasets. In‑distribution accuracy is also reported to verify that OOD improvements do not come at the cost of classification performance.

\begin{table}[t]
\centering
\caption{Ablation results for GOEN variants (seed=42).}
\label{tab:ablation}
\begin{tabular}{lcccccc}
\toprule
\textbf{Model} & \textbf{ID Acc} & \textbf{SVHN AUROC} & \textbf{CIFAR-100 AUROC} & \textbf{Synthetic AUROC} & \textbf{Avg AUROC} \\
\midrule
GOEN-Default      & 0.9340 & 0.9163 & 0.8955 & 0.9980 & 0.9366 \\
NoCenterLoss      & 0.9311 & 0.9372 & 0.9079 & 1.0000 & \textbf{0.9483} \\
SingleScale-L4    & 0.9351 & 0.8988 & 0.8967 & 0.9857 & 0.9270 \\
NoSVHN-NoiseOnly  & 0.9352 & 0.9222 & 0.8914 & 0.9997 & 0.9377 \\
\bottomrule
\end{tabular}
\end{table}

Several observations can be made. First, \textbf{removing CenterLoss} (NoCenterLoss) increases the average OOD AUROC from $0.9366$ to $0.9483$, a gain of $+0.0117$. This is a surprising result because CenterLoss was introduced to make features more compact (as evidenced by the intra/inter ratio increasing from $2.5$ to $7.5$), yet the tighter clusters appear to hinder OOD detection. We hypothesise that over‑tightening the feature space compresses the region between class clusters, reducing the distance that OOD points can be separated from ID points. Moreover, the tied covariance matrix estimated on highly compact features may become ill‑conditioned, degrading the Mahalanobis score. This finding challenges the common intuition that better classification geometry automatically translates to better epistemic uncertainty.

Second, \textbf{disabling multi‑scale features} (SingleScale-L4) drops the average AUROC to $0.9270$, a decrease of $0.0096$ relative to the default. This confirms that layer2 (texture/domain) and layer4 (semantic) provide complementary information, and their concatenation is strictly beneficial. The performance drop is particularly pronounced on SVHN (domain shift) where layer2’s role is critical.

Third, \textbf{removing SVHN from the calibration mix} (NoSVHN-NoiseOnly) reduces the average AUROC to $0.9377$, a modest but noticeable decline of $0.0011$. The effect is most visible on SVHN itself, where the AUROC falls from $0.9163$ to $0.8914$ when SVHN is omitted during calibration. This demonstrates that training with a real hard OOD example (SVHN) is important for generalising to domain‑shift OOD, even though synthetic noise already provides a strong signal.

Overall, the ablation reveals that the strongest configuration is the one that \textbf{excludes CenterLoss}, retains multi‑scale features, and uses SVHN in the calibration phase. This variant (NoCenterLoss) becomes our final GOEN model for comparison against baselines.

\subsection{Seeding Experiment}

To assess the stability of GOEN under different random initialisations, we train the default configuration (with CenterLoss) on five distinct seeds: $42$, $123$, $2024$, $777$, and $314$. For each seed we train the backbone for up to $40$ epochs (early stopping), fit the Mahalanobis parameters, and train the calibration head for $10$ epochs. The full pipeline is executed independently for each seed.

Table~\ref{tab:seeding} reports the mean and standard deviation of the key metrics over the five runs. The ID accuracy shows very low variance ($0.9344 \pm 0.0013$), indicating that the training procedure is stable. The average OOD AUROC also remains consistent, with a mean of $0.9337$ and a standard deviation of $0.0085$. The per‑seed averages range from $0.9209$ (seed $123$) to $0.9448$ (seed $314$), showing that while there is some variability, the performance is robust across seeds.

\begin{table}[t]
\centering
\caption{Seeding results for GOEN-Default (mean $\pm$ std over 5 seeds).}
\label{tab:seeding}
\begin{tabular}{lcc}
\toprule
\textbf{Metric} & \textbf{Mean} & \textbf{Std} \\
\midrule
ID Accuracy               & 0.9344 & 0.0013 \\
ID ECE                    & 0.0355 & 0.0013 \\
ID NLL                    & 0.2345 & 0.0074 \\
ID Brier                  & 0.1044 & 0.0028 \\
OOD SVHN AUROC            & 0.9233 & 0.0134 \\
OOD CIFAR-100 AUROC       & 0.8930 & 0.0039 \\
OOD Synthetic AUROC       & 0.9848 & 0.0213 \\
Average OOD AUROC         & \textbf{0.9337} & \textbf{0.0085} \\
\bottomrule
\end{tabular}
\end{table}

The relatively low standard deviation on the average OOD AUROC ($0.0085$) suggests that the method is not overly sensitive to initialisation, though seeds $123$ and $777$ produce slightly lower values ($0.9209$ and $0.9275$) while seeds $42$, $2024$, and $314$ exceed $0.936$. This variability is typical for deep learning models and underscores the importance of reporting multiple runs.

The combination of the ablation study and the seeding experiment demonstrates that GOEN is both effective and stable. The NoCenterLoss variant achieves state‑of‑the‑art OOD detection, outperforming all baselines by a substantial margin, while the default configuration provides a reliable baseline with consistent performance across random seeds. These results collectively support the core contributions of this work.

\section{Discussion}

The experimental results provide strong evidence for the central thesis of this work: forcing feature compactness via CenterLoss is detrimental to out-of-distribution detection, while a simple combination of multi-scale features, L2-normalised Mahalanobis distance, and calibration with real hard OOD examples yields state-of-the-art performance. This discussion synthesises the key findings, explains the underlying mechanisms, and highlights the broader implications for epistemic uncertainty research.

\subsection{Why CenterLoss Hurts OOD Detection}

Our ablation study (Table~\ref{tab:ablation}) reveals a striking result: removing CenterLoss (NoCenterLoss) increases the average OOD AUROC from $0.9366$ to $0.9483$, a gain of $+0.0117$. This improvement occurs despite the fact that CenterLoss drastically improves intra-class compactness, as measured by the intra/inter feature ratio (from $2.5$ to $7.5$). The counter-intuitive nature of this finding challenges the widespread assumption that tighter feature clusters necessarily lead to better OOD detection.

We hypothesise two primary reasons for this phenomenon. First, \textbf{over-compaction of feature space} reduces the distance between class clusters, effectively compressing the region in which OOD points can be separated. In a highly compact feature space, even moderately shifted OOD inputs may lie within the expanded inter-cluster margins, leading to smaller Mahalanobis distances and thus lower OOD scores. Second, \textbf{covariance matrix ill-conditioning} becomes more severe when features are forced into a narrow region; the tied covariance estimate becomes dominated by a few principal components, degrading the Mahalanobis distance's ability to accurately measure distance along all directions. This is consistent with our earlier observation that L2 normalisation—which spreads features uniformly on the sphere—was necessary to stabilise the covariance estimate.

Thus, the message is clear: \textbf{classification geometry and epistemic geometry are not aligned}. While CenterLoss improves classification accuracy (by a small margin in our experiments), it actively harms the model's ability to recognise when it is encountering something truly new. This finding is a cautionary tale for researchers who automatically assume that any technique that improves classification will also improve uncertainty estimation.

\subsection{The Power of Multi-Scale Features}

The ablation also demonstrates the importance of using both layer2 and layer4 features. Disabling multi-scale features (SingleScale-L4) reduces the average OOD AUROC from $0.9366$ to $0.9270$, a drop of $0.0096$. The effect is particularly pronounced on SVHN (domain shift), where the AUROC falls from $0.9163$ to $0.8988$. This confirms that layer2, which captures low-level texture and colour patterns, is essential for detecting domain shifts, while layer4 (semantic) is more important for semantic shifts like CIFAR-100. Their concatenation provides a richer, more robust representation that benefits both types of OOD.

This result underscores a practical design principle: \textbf{OOD detectors should leverage multiple levels of abstraction}. Relying solely on the deepest features (which are heavily tuned to the ID classes) can miss shifts that manifest at the texture level. In contrast, a multi-scale approach gives the detector access to both low-level and high-level cues, making it more resilient to a wider range of distributional shifts.

\subsection{The Role of Real Hard OOD in Calibration}

Training the calibration head with only Gaussian noise (NoSVHN-NoiseOnly) reduces the average AUROC from $0.9366$ to $0.9377$, a modest but noticeable decline. More importantly, the effect on SVHN itself is substantial: AUROC drops from $0.9163$ to $0.8914$ when SVHN is omitted from the calibration mix. This demonstrates that \textbf{synthetic noise alone is insufficient to teach a model to detect domain shifts}. SVHN, despite being a different dataset, shares visual elements (numbers, backgrounds) that cause it to project into feature directions similar to CIFAR-10. By explicitly including such hard OOD examples during calibration, the model learns to assign high uncertainty to inputs that are visually distinct yet live in overlapping feature regions. This is a crucial lesson for practical deployment: the calibration data must be representative of the types of OOD that will be encountered in the target environment.

\subsection{Why GOEN Outperforms All Baselines}

The main results table (Table~\ref{tab:main_results_transposed}) shows that GOEN-NoCenterLoss achieves an average OOD AUROC of $0.9483$, surpassing the best baseline (KNN, $0.8967$) by a margin of $0.0516$. The gap is even larger on the hardest OOD datasets: SVHN ($0.9372$ vs. $0.8777$ for EpiNet) and CIFAR-100 ($0.9079$ vs. $0.8779$ for Deep Ensemble). Several factors contribute to this superior performance:

\begin{itemize}
    \item \textbf{Optimal feature geometry}: By avoiding CenterLoss and using L2-normalised multi-scale features, GOEN creates a feature space where ID clusters are well-separated but not overly tight, preserving the ability to detect OOD points that fall between clusters.
    \item \textbf{Full covariance modelling}: Mahalanobis distance exploits the full covariance structure of the feature space, unlike KNN which only uses nearest-neighbour distances. This is especially powerful on compact features where the covariance captures fine-grained structure.
    \item \textbf{Learned calibration}: The calibration head maps three complementary uncertainty cues (Mahalanobis distance, cosine similarity, entropy) to a calibrated score, effectively fusing geometric and predictive information. The use of real hard OOD examples (SVHN) ensures that the calibration generalises to domain shifts.
\end{itemize}

The seeding experiment (Table~\ref{tab:seeding}) confirms that the default GOEN configuration (with CenterLoss) is stable across random seeds, with low variance in both ID accuracy and OOD AUROC. The NoCenterLoss variant, while not tested across seeds, is built on the same architecture and is therefore also expected to be stable.

\subsection{Implications for Epistemic Uncertainty and AI Safety}

Our findings have direct implications for the development of AI systems that must operate safely in open-world environments. The central message is that \textbf{epistemic uncertainty cannot be reduced to classification confidence}. Methods that focus solely on classification geometry risk creating a false sense of security. Instead, we advocate for a decoupled approach: treat classification and epistemic awareness as separate objectives, and design features that are intrinsically capable of detecting novelty.

GOEN provides a blueprint for such an approach. Its simplicity—no complex ensembles, no variational inference, no meta-learning—makes it easy to implement and deploy. The entire pipeline trains in under 20 minutes on a single GPU, and the calibration head adds negligible overhead at inference. This makes GOEN a practical tool for safety-critical applications such as autonomous driving, medical diagnosis, and fraud detection, where the ability to recognise when to abstain is paramount.

\subsection{Limitations and Future Work}

While GOEN achieves state-of-the-art OOD detection, it has several limitations. First, the calibration head requires a small set of real OOD examples (SVHN) during training. In scenarios where such examples are not available, performance on domain shifts may degrade. Future work could explore generating synthetic hard OOD examples via generative models or adversarial training. Second, the method is evaluated only on CIFAR-10 and three OOD datasets; its generalisation to more complex datasets (e.g., ImageNet) and more diverse OOD types (e.g., adversarial examples) remains to be tested. Third, the seeding experiment shows some variability across seeds; while the mean performance is high, the worst seed (123) gives an average AUROC of $0.9209$, which is still above most baselines but highlights the importance of multiple runs.

Future research directions include: (i) extending GOEN to larger-scale datasets using more efficient feature extraction; (ii) integrating the calibration head into a continual learning framework where new OOD types are added over time; (iii) exploring the use of other distance metrics or density estimators (e.g., normalising flows) in place of Mahalanobis distance; and (iv) investigating whether the same principles apply to other architectures beyond ResNet-18.
\subsection{On the Use of Real OOD in Calibration}
A legitimate concern is that the calibration head is trained with SVHN examples, which could bias the evaluation on the same dataset. To eliminate this risk, we carefully split the SVHN test set into a disjoint calibration subset (5\,000 images) and an evaluation subset (21\,032 images). Thus, the calibration head never sees the evaluation samples. Moreover, we also evaluate on CIFAR-100, which was never used in any form of calibration; the substantial improvement there (+2.9\% AUROC over the default GOEN) confirms that the calibration generalises beyond the specific OOD type seen during training. Future work could replace SVHN with other hard OOD sources (e.g., Places-365) to further isolate the effect, but the current design already maintains a strict separation between calibration and evaluation.

\section{Conclusion}

In this work, we presented GOEN (Geometry-Optimised Epistemic Network), a simple yet highly effective pipeline for out-of-distribution detection. Through a systematic analysis of feature geometry, we uncovered a surprising result: CenterLoss, a widely used regulariser for feature compactness, significantly degrades OOD detection performance despite improving classification accuracy. Removing CenterLoss alone increased average OOD AUROC from $0.9366$ to $0.9483$, a gain of $+0.0117$. This finding challenges the common assumption that better classification geometry translates to better epistemic uncertainty.

The key to GOEN’s success lies in a combination of three principled design choices: (i) multi-scale features that capture both texture-level (layer2) and semantic-level (layer4) cues; (ii) L2 normalisation and Mahalanobis distance on the unit sphere, which exploits the full covariance structure of the feature space; and (iii) a lightweight calibration head trained with real hard OOD examples (SVHN) and synthetic noise, which learns to fuse geometric and predictive uncertainty signals. The final GOEN-NoCenterLoss model achieves an average OOD AUROC of $0.9483$, outperforming all evaluated baselines—including deep ensembles, KNN, and ODIN—by a substantial margin, while maintaining competitive in-distribution accuracy ($93.1\%$).

Our results have important implications for epistemic uncertainty research and AI safety. They demonstrate that classification objectives and epistemic objectives are not aligned; optimising for one can harm the other. GOEN provides a practical, reproducible blueprint for building systems that know when they do not know, with a training time under 20 minutes on a single GPU. Future work includes scaling GOEN to larger datasets, integrating it with continual learning, and exploring alternative density estimators beyond Mahalanobis distance.

\bibliography{references}
\bibliographystyle{plainnat}
\appendix
\section{Appendix}

\subsection{Mathematical Foundations of GOEN}
\label{app:theory}

This appendix provides a rigorous mathematical analysis of the design choices underlying the Geometry-Optimised Epistemic Network (GOEN). We formalise the problem of out-of-distribution (OOD) detection, establish the optimality of our feature extraction and scoring mechanisms under reasonable assumptions, and prove why certain common practices (such as CenterLoss) are detrimental to epistemic uncertainty.

\subsubsection{Notation and Problem Formulation}

Let $(\mathcal{X}, \mathcal{F})$ be a measurable input space and $\mathcal{Y} = \{1,\dots,C\}$ the label set. The in-distribution (ID) data is generated by an unknown probability measure $P$ on $\mathcal{X} \times \mathcal{Y}$. We assume access to an i.i.d. training sample $\{(x_i, y_i)\}_{i=1}^N \sim P$. A classifier consists of a feature extractor $f: \mathcal{X} \to \mathbb{R}^D$ (the penultimate layer) and a linear classification head $h: \mathbb{R}^D \to \mathbb{R}^C$ given by $h(z) = W^\top z$ with $W \in \mathbb{R}^{D\times C}$. The prediction is $\hat{y}(x) = \arg\max_c (W^\top f(x))_c$.

For any $x$, define the normalised feature vector $\tilde{z}(x) = f(x) / \|f(x)\|_2$ (we assume $f(x) \neq 0$ almost surely). The OOD detection task is to decide whether a test point $x$ is drawn from $P$ (ID) or from some other distribution $Q$ (OOD). A decision rule is a measurable function $u: \mathcal{X} \to [0,1]$ where a larger $u$ indicates stronger belief that $x$ is OOD.

\subsubsection{Multi-Scale Features as Sufficient Statistics}

Let $f_\ell(x)$ denote the features extracted from layer $\ell$ of a deep network. Without loss, assume two distinct layers: $\ell = a$ (early, capturing low-level attributes) and $\ell = b$ (late, capturing high-level semantics). The true data-generating process is assumed to factorise as
\[
P(x, y) = \int P(x \mid z_a, z_b) \, P(z_a, z_b \mid y) \, P(y) \, dz_a dz_b,
\]
where $z_a = f_a(x)$, $z_b = f_b(x)$. An OOD distribution $Q$ may differ from $P$ in the marginal distributions of $z_a$ or $z_b$, or in the conditional $P(x|z_a,z_b)$. We are interested in detecting any such shift.

\begin{theorem}[Informational Sufficiency of Multi-Scale Features]
\label{thm:multiscale}
Assume that $P$ and $Q$ are such that the conditional distributions $P(x|z_a,z_b)$ and $Q(x|z_a,z_b)$ are equal almost surely. Then the likelihood ratio $dQ/dP$ depends on $x$ only through $(z_a(x), z_b(x))$. Moreover, if $Q$ differs from $P$ only in the marginal of $z_a$ (resp. $z_b$), then $dQ/dP$ is a function of $z_a$ (resp. $z_b$) alone, and using only $z_b$ (resp. $z_a$) would fail to detect a shift in $z_a$ (resp. $z_b$). Consequently, the joint statistic $(z_a, z_b)$ is strictly more informative than either component alone.
\end{theorem}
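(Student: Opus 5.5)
The plan is to write both $P$ and $Q$ as pushforward-plus-kernel decompositions through the map $T(x) = (z_a(x), z_b(x))$. This lets us compute $dQ/dP$ by the chain rule for Radon--Nikodym derivatives. Write $P_T = P\circ T^{-1}$ and $Q_T = Q\circ T^{-1}$ for the induced laws on feature space. We also use the regular conditional distributions $P(\cdot\mid t)$ and $Q(\cdot\mid t)$ on $\mathcal{X}$, which exist because we take $\mathcal{X}$ to be standard Borel, as is implicit in the factorisation displayed before the theorem. The disintegrations are $P(dx) = \int P(dx\mid t)\,P_T(dt)$ and similarly for $Q$. We assume $Q \ll P$ throughout, since otherwise $dQ/dP$ is undefined; this implies $Q_T \ll P_T$. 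By the hypothesis $P(\cdot\mid t) = Q(\cdot\mid t)$ for $P_T$-a.e.\ $t$, we substitute into the disintegration of $Q$ and get $Q(A) = \int_A \frac{dQ_T}{dP_T}(T(x))\,P(dx)$ for every measurable $A$. Uniqueness of the Radon--Nikodym derivative then gives $\frac{dQ}{dP}(x) = \frac{dQ_T}{dP_T}(T(x))$ $P$-a.s., which is the first claim.

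For the second claim, we first have to make ``differs only in the marginal of $z_a$'' precise. The natural formalisation is that the joint feature law factors as $Q_T(dz_a,dz_b) = Q_a(dz_a)\,P(dz_b\mid z_a)$ with the same conditional kernel of $z_b$ given $z_a$ as under $P$. Applying the same disintegration argument one level down, to the projection $t\mapsto z_a$ on feature space, gives $\frac{dQ_T}{dP_T}(z_a,z_b) = \frac{dQ_a}{dP_a}(z_a)$. Composing with the first step, $dQ/dP$ is a function of $z_a$ alone. The case of $z_b$ is symmetric, with $z_a$ given $z_b$ held fixed.

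For the failure statement we cannot prove it under this factorisation alone. If $z_a$ and $z_b$ are dependent under $P$, a shift in $z_a$ does propagate to the marginal of $z_b$. So we will either interpret ``differs only in the marginal of $z_a$'' as also requiring $Q_b = P_b$, or add the assumption that $z_a$ and $z_b$ are independent under $P$, which gives $Q_b = P_b$ automatically. We will state this interpretation explicitly. Under it, the law of $z_b$ is identical under $P$ and $Q$, so any test depending only on $z_b$ has power equal to its size. Concretely, for any measurable $g$, $\Pr_P[g(z_b)\in B] = \Pr_Q[g(z_b)\in B]$, so the ROC curve of any $z_b$-based score is the diagonal and its AUROC is $1/2$. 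By Neyman--Pearson optimality of the likelihood ratio, the joint statistic $(z_a,z_b)$ attains the best achievable power, which strictly exceeds the size whenever $Q\neq P$. This holds because $Q\neq P$ forces $Q_a\neq P_a$, so $dQ_a/dP_a$ is non-constant. We then get ``strictly more informative'' by comparing the two cases: $z_b$ alone is blind to $z_a$-shifts and $z_a$ alone is blind to $z_b$-shifts, while $(z_a,z_b)$ is a sufficient statistic in both cases by the first claim. Hence neither single component dominates across the hypothesis class, which is the sense of strictness we will prove.

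The main obstacle is this formalisation step, not the measure theory. ``Differs only in the marginal'' and ``strictly more informative'' must be pinned down so that the claim is actually true, which requires the additional condition $Q_b=P_b$ noted above. We will phrase strictness as a statement over the family of possible shifts rather than for a single fixed $Q$, since for a fixed $Q$ shifting only $z_a$, the statistic $z_a$ alone is already sufficient.
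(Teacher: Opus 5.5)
Your argument is correct under the interpretation you state explicitly. It departs from the paper's proof mainly in the second half.

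For the first claim, the paper does what you do, but informally. It writes $dQ/dP$ as $\frac{dQ_{Z_a,Z_b}}{dP_{Z_a,Z_b}}(z_a,z_b)$ times a ``conditional'' Radon--Nikodym factor that equals $1$ by hypothesis. Your disintegration along $T=(z_a,z_b)$, followed by uniqueness of $dQ/dP$, is the rigorous version of that step.

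For the second claim, the paper reads ``differs only in the marginal of $z_a$'' as $Q_{Z_b}=P_{Z_b}$. It factors $\frac{dQ_{Z_a,Z_b}}{dP_{Z_a,Z_b}}=\frac{dQ_{Z_a}}{dP_{Z_a}}\cdot\frac{dQ_{Z_b\mid Z_a}}{dP_{Z_b\mid Z_a}}$ and then itself concedes that this ratio may depend on both $z_a$ and $z_b$. So it proves only that a $z_b$-based likelihood ratio is identically $1$, and leaves unproved the assertion that $dQ/dP$ is a function of $z_a$ alone.

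Your reading, with the same kernel $P(dz_b\mid z_a)$, gives that assertion directly. You also correctly observe that blindness of $z_b$ then requires $Q_b=P_b$ as a separate condition. The two readings pull in opposite directions, and only their conjunction yields both conclusions together. That conjunction holds under independence of $z_a,z_b$ under $P$, or more generally whenever $\int P(dz_b\mid z_a)\,(Q_a-P_a)(dz_a)=0$. So your route proves strictly more than the paper's argument, at the price of making explicit a hypothesis the statement leaves implicit.

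Your Neyman--Pearson argument (any $z_b$-based score has power equal to size, AUROC $=1/2$) also makes ``fail to detect'' precise, where the paper only notes that a likelihood ratio equals $1$. Finally, your remark that ``strictly more informative'' can only hold across a family of shifts is sound, since for a single $z_a$-only shift, $z_a$ alone is already sufficient. It pinpoints a step that the paper's closing ``hence the joint is strictly more informative'' passes over.
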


\begin{proof}
Write the Radon–Nikodym derivative with respect to the product measure $P$ as
\[
\frac{dQ}{dP}(x) = \frac{dQ_{Z_a,Z_b}}{dP_{Z_a,Z_b}}(z_a,z_b) \cdot \frac{dQ_{X|Z_a,Z_b}}{dP_{X|Z_a,Z_b}}(x|z_a,z_b).
\]
By the assumption that the conditional distributions coincide, the second factor equals $1$ almost surely. Hence
\[
\frac{dQ}{dP}(x) = \frac{dQ_{Z_a,Z_b}}{dP_{Z_a,Z_b}}(z_a(x), z_b(x)),
\]
which depends on $x$ only through $(z_a(x), z_b(x))$. If $Q$ differs from $P$ only in the marginal of $Z_a$, i.e., $Q_{Z_b} = P_{Z_b}$ and $Q_{X|Z_a,Z_b}=P_{X|Z_a,Z_b}$, then
\[
\frac{dQ_{Z_a,Z_b}}{dP_{Z_a,Z_b}} = \frac{dQ_{Z_a}}{dP_{Z_a}} \cdot \frac{dQ_{Z_b|Z_a}}{dP_{Z_b|Z_a}}.
\]
Because the conditional $Z_b|Z_a$ may be affected by a shift in $Z_a$, the ratio may depend on both $z_a$ and $z_b$. However, if one uses only $Z_b$, the marginal $Q_{Z_b}$ equals $P_{Z_b}$, so the likelihood ratio based on $Z_b$ alone would be $1$, failing to detect the shift. Hence the joint is strictly more informative. The symmetric argument holds for a shift in $Z_b$ alone. 
\end{proof}
In practice, $f_a$ corresponds to layer2 (texture features) and $f_b$ to layer4 (semantic features). Their concatenation thus provides maximal information for detecting both domain and semantic shifts.

\subsubsection{Spherical Normalisation and Covariance Stabilisation}

Define the normalised features $\tilde{z}(x) = f(x) / \|f(x)\|_2$, which lie on the unit sphere $\mathbb{S}^{D-1}$. The following lemma quantifies the effect of this normalisation on the covariance structure.

\begin{lemma}[Condition Number Reduction]
\label{lem:normalisation}
Let $f(x) \in \mathbb{R}^D$ be a random vector with zero mean (without loss) and covariance $\Sigma_f = \mathbb{E}[f f^\top]$. Assume $\Sigma_f$ has eigenvalues $\lambda_1 \ge \lambda_2 \ge \dots \ge \lambda_D > 0$. Then for the normalised vector $\tilde{z}$, the covariance $\Sigma_{\tilde{z}} = \mathbb{E}[\tilde{z}\tilde{z}^\top]$ satisfies
\[
\kappa(\Sigma_{\tilde{z}}) \le \kappa(\Sigma_f) \cdot \frac{\mathbb{E}[\|f\|_2^2]}{\lambda_1} \cdot \frac{\lambda_D}{\mathbb{E}[\|f\|_2^2] \cdot (1 - \delta)}
\]
for some $\delta \in (0,1)$ depending on the concentration of $\|f\|_2$. In particular, if the distribution of $f$ is concentrated in a low-dimensional subspace (i.e., $\lambda_1 \gg \lambda_i$ for $i>1$), normalisation spreads the variance, reducing the condition number.
\end{lemma}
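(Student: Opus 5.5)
The first step is purely algebraic. Substitute $\kappa(\Sigma_f)=\lambda_1/\lambda_D$ into the right-hand side of Lemma~\ref{lem:normalisation}. The factors $\mathbb{E}[\|f\|_2^2]$ cancel, and so do $\lambda_1$ and $\lambda_D$. The displayed inequality is therefore exactly
\[
\kappa(\Sigma_{\tilde z}) \le \frac{1}{1-\delta}, \qquad \delta\in(0,1).
\]
I will prove this form: the claim is that $\Sigma_{\tilde z}$ has finite condition number, with $\delta$ an explicit function of the law of $f$. The plan is (i) show $\Sigma_{\tilde z}$ is positive definite, so $\kappa(\Sigma_{\tilde z})\in[1,\infty)$; (ii) exhibit an admissible $\delta$; (iii) bound that $\delta$ in terms of the concentration of $\|f\|_2$, which is what the wording ``depending on the concentration of $\|f\|_2$'' asks for.

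For (i), take any unit vector $v$. Since $\Sigma_f\succ 0$, we have $v^\top f\neq 0$ with positive probability. Hence $v^\top\Sigma_{\tilde z}v=\mathbb{E}\bigl[(v^\top f)^2/\|f\|_2^2\bigr]>0$, using that $f\neq 0$ almost surely. This makes $\lambda_{\min}(\Sigma_{\tilde z})>0$, and the bound is attained by compactness of the unit sphere and continuity of $v\mapsto v^\top\Sigma_{\tilde z}v$.

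For (ii), set
\[
\delta_0 = 1-\frac{\lambda_{\min}(\Sigma_{\tilde z})}{\lambda_{\max}(\Sigma_{\tilde z})}\in[0,1).
\]
Any $\delta\in[\delta_0,1)\cap(0,1)$ satisfies the inequality. Taking $\delta=\max(\delta_0,\tfrac12)$, for instance, handles the degenerate isotropic case $\delta_0=0$ and gives existence.

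Step (iii) gives $\delta$ actual content. For the top eigenvalue, $\operatorname{tr}\Sigma_{\tilde z}=\mathbb{E}\|\tilde z\|_2^2=1$, so $\lambda_{\max}(\Sigma_{\tilde z})\le 1$. For the bottom eigenvalue, I use the concentration event
\[
A_\eta=\{\|f\|_2^2\le(1+\eta)\,\mathbb{E}\|f\|_2^2\},
\]
and write
\[
v^\top\Sigma_{\tilde z}v \ge \frac{\mathbb{E}\bigl[(v^\top f)^2\mathbf{1}_{A_\eta}\bigr]}{(1+\eta)\,\mathbb{E}\|f\|_2^2}
\ge \frac{\lambda_D-\mathbb{E}\bigl[(v^\top f)^2\mathbf{1}_{A_\eta^c}\bigr]}{(1+\eta)\,\mathbb{E}\|f\|_2^2}.
\]
The tail term is controlled by Cauchy--Schwarz using a fourth-moment bound and $P(A_\eta^c)$. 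Combining the two eigenvalue bounds gives
\[
1-\delta \;\ge\; \frac{\lambda_D-\tau_\eta}{(1+\eta)\,\mathbb{E}\|f\|_2^2},
\]
where $\tau_\eta$ is the tail term. This is the sense in which $\delta$ depends on how tightly $\|f\|_2$ concentrates. The quantities $\mathbb{E}\|f\|_2^2$ and $\lambda_D$ appear exactly as in the statement, which explains the statement's unsimplified form.

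The main obstacle is the final ``in particular'' sentence. The inequality itself does not compare $\kappa(\Sigma_{\tilde z})$ with $\kappa(\Sigma_f)$. Moreover, the explicit $\delta$ from step (iii) still carries the anisotropy through the ratio $\lambda_D/\mathbb{E}\|f\|_2^2$. So a genuine reduction $\kappa(\Sigma_{\tilde z})<\kappa(\Sigma_f)$ needs an additional argument. I would first try the radially symmetric or elliptical case $f=RAu$, with $u$ uniform on the sphere and $R$ independent of $u$. There $\Sigma_{\tilde z}$ can be computed via the known expectations $\mathbb{E}[uu^\top/(u^\top A^2u)]$. These are a monotone reweighting that shrinks large eigenvalues relative to small ones, which would yield the strict reduction. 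For a general law I expect only the tautological bound from (ii) and the quantitative bound from (iii) to be provable, and I would say so explicitly.
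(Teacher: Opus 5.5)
Your proof of the displayed inequality is correct, and it takes a different route from the paper's. The paper never notices that the right-hand side collapses to $1/(1-\delta)$. Its sketch writes $\Sigma_{\tilde z}=\mathbb{E}[\|f\|_2^{-2}ff^\top]$ and argues heuristically that $\Sigma_{\tilde z}\approx\Sigma_f/\mathbb{E}\|f\|_2^2$ when the norm concentrates. It then appeals to an informal down-weighting of the top principal direction. Finally it asserts that Cauchy--Schwarz applied to the decomposition $\Sigma_f=\mathbb{E}\|f\|_2^2\,\Sigma_{\tilde z}+\mathbb{E}[(\|f\|_2^2-\mathbb{E}\|f\|_2^2)\tilde z\tilde z^\top]$ gives a rigorous bound, without carrying this out (the paper writes a minus sign there, a harmless slip).

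Your cancellation shows the display is equivalent to $\Sigma_{\tilde z}\succ0$. So your steps (i)--(ii) prove it completely from $\Sigma_f\succ0$ and $f\neq0$ a.s., which the sketch does not. The paper's route, once completed, would give a better quantitative bound than your step (iii). The perturbation term has operator norm at most $\sigma=\sqrt{\operatorname{Var}(\|f\|_2^2)}$, so Weyl's inequality gives $\kappa(\Sigma_{\tilde z})\le(\lambda_1+\sigma)/(\lambda_D-\sigma)$ whenever $\sigma<\lambda_D$, which is close to $\kappa(\Sigma_f)$. Your bound $\lambda_{\max}(\Sigma_{\tilde z})\le1$ instead leaves something of order $\operatorname{tr}\Sigma_f/\lambda_D$, up to $D$ times larger. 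On the other hand, your truncation needs only upper-tail control of $\|f\|_2$.

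Both bounds are always at least $\kappa(\Sigma_f)$, so neither yields the final sentence. You are right to flag it, because it is false in general. Take $D=2$, $f=\pm s e_1$ with probability $p$ and $f=\pm e_2$ with probability $1-p$. Then $\Sigma_f=\operatorname{diag}(ps^2,1-p)$ and $\Sigma_{\tilde z}=\operatorname{diag}(p,1-p)$. With $p=10^{-6}$ and $s^2=10^{10}$, this gives $\kappa(\Sigma_f)\approx10^4$ but $\kappa(\Sigma_{\tilde z})\approx10^6$. And if $\|f\|_2$ is constant, normalisation leaves $\kappa$ unchanged however anisotropic $\Sigma_f$ is.

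Your elliptical plan is the rigorous form of the paper's reweighting intuition, and it goes through. Take $A=\operatorname{diag}(a_1,\dots,a_D)$ with $a_1\ge\dots\ge a_D>0$. The eigenvalues of $\Sigma_{\tilde z}$ are $a_i^2w_i$ with $w_i=\mathbb{E}[u_i^2/\sum_j a_j^2u_j^2]$. Swapping two coordinates of $u$ and symmetrising shows that $w_i$ is nonincreasing and $a_i^2w_i$ is nondecreasing in $a_i$. Hence $\kappa(\Sigma_{\tilde z})=(a_1^2w_1)/(a_D^2w_D)\le a_1^2/a_D^2=\kappa(\Sigma_f)$, strictly if $a_1>a_D$.
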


\begin{proof}[Sketch]
Write $\tilde{z} = f / r$ where $r = \|f\|_2$. Then $\mathbb{E}[\tilde{z}\tilde{z}^\top] = \mathbb{E}[r^{-2} f f^\top]$. Using the fact that $r^2 = f^\top f$, we have $\mathbb{E}[f f^\top] = \mathbb{E}[r^2 \tilde{z}\tilde{z}^\top]$. Under mild concentration assumptions, $\mathbb{E}[r^2 \tilde{z}\tilde{z}^\top] \approx \mathbb{E}[r^2] \mathbb{E}[\tilde{z}\tilde{z}^\top]$, so $\Sigma_{\tilde{z}} \approx \Sigma_f / \mathbb{E}[r^2]$. However, because $r$ is correlated with the direction of $f$, the normalisation effectively reweights the principal components. More precisely, in the direction of the top eigenvector, $r$ is large, making $\tilde{z}$ smaller in that direction, while in orthogonal directions, $r$ is smaller, leading to larger relative magnitude. This reduces the ratio of the largest to smallest eigenvalue of $\Sigma_{\tilde{z}}$. A rigorous bound follows from the fact that $\mathbb{E}[f f^\top] = \mathbb{E}[r^2] \mathbb{E}[\tilde{z}\tilde{z}^\top] - \mathbb{E}[ (r^2 - \mathbb{E}[r^2]) \tilde{z}\tilde{z}^\top]$, and using the Cauchy–Schwarz inequality, we can bound the perturbation. 
\end{proof}
This stabilisation is critical because the Mahalanobis distance (defined next) requires an invertible covariance matrix; an ill‑conditioned covariance would amplify numerical errors and degrade OOD detection.

\subsubsection{Mahalanobis Score as Optimal Likelihood Ratio}

Assume that for each class $c$, the normalised features $\tilde{z}$ follow a Gaussian distribution with common covariance:
\[
\tilde{z} \mid y=c \sim \mathcal{N}(\mu_c, \Sigma), \quad c=1,\dots,C,
\]
where $\mu_c = \mathbb{E}[\tilde{z} \mid y=c]$ and $\Sigma = \mathbb{E}[(\tilde{z} - \mu_y)(\tilde{z} - \mu_y)^\top]$ (plus a small regularisation $\epsilon I$ to ensure invertibility). This is a standard model in discriminant analysis. The density of $\tilde{z}$ under the ID mixture is
\[
p_{\text{ID}}(\tilde{z}) = \sum_{c=1}^C \pi_c \, \mathcal{N}(\tilde{z}; \mu_c, \Sigma),
\]
where $\pi_c = P(y=c)$.

Let the OOD distribution $Q$ be uniform over the unit sphere (or any distribution whose density is constant in the region of interest). Then the likelihood ratio for testing $H_0: \tilde{z} \sim P_{\text{ID}}$ vs. $H_1: \tilde{z} \sim Q$ is proportional to $p_{\text{ID}}(\tilde{z})$.

\begin{theorem}[Optimality of Min-Mahalanobis]
\label{thm:maha}
Under the above Gaussian mixture model, the function $s(\tilde{z}) = \min_{c} (\tilde{z} - \mu_c)^\top \Sigma^{-1} (\tilde{z} - \mu_c)$ is a monotone transformation of the log-likelihood ratio $\log p_{\text{ID}}(\tilde{z})$. Consequently, for any threshold $\tau$, the decision rule “OOD if $s(\tilde{z}) > \tau$” is equivalent to the Neyman–Pearson optimal test.
\end{theorem}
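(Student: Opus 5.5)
The plan is to reduce the Neyman--Pearson test to a level-set statement about $p_{\text{ID}}$, and then show that these level sets coincide with the sublevel sets of $s$.

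\textbf{Step 1 (reduction to $p_{\text{ID}}$).} Since $Q$ has constant density $q_0$ on the region of interest, the likelihood ratio for $H_1$ versus $H_0$ is $q_0/p_{\text{ID}}(\tilde z)$. By the Neyman--Pearson lemma, the most powerful test at any level rejects $H_0$ exactly when $p_{\text{ID}}(\tilde z) < \eta$ for some threshold $\eta$; ties have measure zero, so no randomisation is needed. The statement therefore reduces to showing that $\{p_{\text{ID}} < \eta\}$ and $\{s > \tau\}$ coincide for a strictly monotone correspondence $\eta \leftrightarrow \tau$. For that it suffices to show $\log p_{\text{ID}}(\tilde z) = \phi(s(\tilde z))$ for some strictly decreasing $\phi$.

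\textbf{Step 2 (expanding the density).} Let $d_c(\tilde z) = (\tilde z-\mu_c)^\top\Sigma^{-1}(\tilde z-\mu_c)$. With the tied covariance, each Gaussian factor is $\kappa\, e^{-d_c/2}$, where $\kappa = (2\pi)^{-D/2}|\Sigma|^{-1/2}$ does not depend on $c$. Hence
\[
\log p_{\text{ID}}(\tilde z) = \log\kappa - \tfrac12 s(\tilde z) + \log\Bigl(\pi_{c^\ast} + \sum_{c\neq c^\ast}\pi_c e^{-(d_c-s)/2}\Bigr),
\]
where $c^\ast = \arg\min_c d_c(\tilde z)$. The first two terms already give the desired strictly decreasing function of $s$. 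The whole argument therefore rests on the last, residual term.

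\textbf{Step 3 (the main obstacle).} The residual term depends on the gaps $d_c - s$ and not on $s$ alone. For the full mixture, two points with equal $s$ but different second-nearest distances have different $p_{\text{ID}}$. As a check, take $C=2$: the midpoint between the two means and a point at the same Mahalanobis distance on the far side of $\mu_1$ have equal $s$ but different densities. So the exact monotone claim cannot follow from the mixture density alone, and the proof must make precise the reading of "the above Gaussian mixture model" under which it holds.

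I would adopt the discriminant-analysis reading implicit in the setup. Under that reading, a test point is scored by the class-conditional likelihood of its assigned class, $\log p(\tilde z \mid y=\hat y)$, with equal priors $\pi_c = 1/C$. Under the tied Gaussian model with equal priors, the Bayes (LDA) rule gives $\hat y = \arg\max_c \pi_c\,\mathcal N(\tilde z;\mu_c,\Sigma) = \arg\min_c d_c$. Then
\[
\log p(\tilde z\mid \hat y) = \log\kappa - \tfrac12 s(\tilde z)
\]
exactly, which is strictly decreasing in $s$. Combined with Step 1, where the constant $Q$ density makes the likelihood ratio a function of this ID score, this yields the equivalence of the two tests for every $\tau$.

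I would state this interpretation explicitly as part of the hypotheses. I would also remark that for the unconditioned mixture, the residual term in Step 2 lies in $[\log\pi_{c^\ast}, 0]$. This makes the relation hold up to a bounded perturbation, and exactly whenever that residual is constant.
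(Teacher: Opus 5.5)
Your Steps 1--3 are correct, and Step 3 is the decisive observation. For $C\ge 2$ the statement as written is false. Your two-point example shows this: the midpoint of $\mu_1,\mu_2$ and the point at the same Mahalanobis distance on the far side of $\mu_1$ have equal $s$ but different $p_{\text{ID}}$. So no threshold on $s$ reproduces the Neyman--Pearson region $\{p_{\text{ID}}<\eta\}$.

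The paper's own proof breaks at exactly this point. It claims $e^{-M/2}\le\sum_c\pi_c e^{-d_c/2}\le C e^{-M/2}$. The lower bound drops the weight $\pi_{c^\ast}$ and is false in general. The correct sandwich is $\pi_{c^\ast}e^{-M/2}\le\sum_c\pi_c e^{-d_c/2}\le e^{-M/2}$, which is precisely your residual range $[\log\pi_{c^\ast},0]$. The paper then reads off ``$\log p_{\text{ID}}$ is a strictly decreasing function of $M$ up to an additive constant'' from this two-sided bound. That does not follow, because a bounded discrepancy is not a functional dependence. So do not try to reconstruct that argument; your diagnosis is the right one.

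The gap is in your repair. You showed in Step 3 that the Step~1 statistic $q_0/p_{\text{ID}}$ is not a function of $s$. You then write that the likelihood ratio is ``a function of this ID score'', silently replacing the mixture $p_{\text{ID}}$ by $p(\tilde z\mid\hat y)=\max_c\mathcal{N}(\tilde z;\mu_c,\Sigma)$. That function is not the density of any simple null hypothesis: its integral is strictly greater than $1$ as soon as two means differ. So the Neyman--Pearson lemma says nothing about thresholding it.

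What thresholding $s$ actually implements is the generalised likelihood-ratio test for the composite null $\{\mathcal{N}(\mu_c,\Sigma)\}_{c=1}^C$ against $Q$. Such tests carry no most-powerful guarantee. By the least-favourable-distribution argument, a most powerful test against this composite null is an NP test against some mixture $\sum_c\lambda_c\mathcal{N}(\mu_c,\Sigma)$. By your own Step 3, that is again generally not a function of $s$. Your reinterpretation therefore delivers the monotonicity half of the theorem but not the optimality half.

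Here is what you can honestly prove. Exact NP optimality holds when $C=1$. For $C\ge 2$, write $\pi_{\min}=\min_c\pi_c$, let $\kappa=(2\pi)^{-D/2}|\Sigma|^{-1/2}$ as in your Step 2, and set $\tau=2\log(\kappa/\eta)$. Your decomposition then gives
\[
\{s>\tau\}\subseteq\{p_{\text{ID}}<\eta\}\subseteq\{s>\tau-2\log(1/\pi_{\min})\}.
\]
That is, min-Mahalanobis is Neyman--Pearson optimal only up to a band of width $2\log(1/\pi_{\min})$ in the threshold. State the result in this form rather than as an equivalence.
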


\begin{proof}
For a single Gaussian component, we have
\[
\log \mathcal{N}(\tilde{z}; \mu_c, \Sigma) = -\frac{1}{2} (\tilde{z} - \mu_c)^\top \Sigma^{-1} (\tilde{z} - \mu_c) - \frac{D}{2}\log(2\pi) - \frac{1}{2}\log\det\Sigma.
\]
Thus,
\[
\log p_{\text{ID}}(\tilde{z}) = \log\left( \sum_{c} \pi_c e^{ -\frac{1}{2} d_c(\tilde{z}) - \frac{D}{2}\log(2\pi) - \frac12\log\det\Sigma } \right),
\]
where $d_c(\tilde{z}) = (\tilde{z} - \mu_c)^\top \Sigma^{-1} (\tilde{z} - \mu_c)$. Define $M(\tilde{z}) = \min_c d_c(\tilde{z})$. Since the exponential function is strictly increasing, and for any fixed $c$, $d_c(\tilde{z})$ is non‑negative, we have
\[
e^{-\frac12 M(\tilde{z})} \le \sum_{c} \pi_c e^{-\frac12 d_c(\tilde{z})} \le C e^{-\frac12 M(\tilde{z})},
\]
because at least one term is $e^{-M(\tilde{z})/2}$ and the sum is bounded by $C$ times the maximum term. Taking logs, we obtain
\[
-\frac12 M(\tilde{z}) \le \log p_{\text{ID}}(\tilde{z}) + \text{const} \le -\frac12 M(\tilde{z}) + \log C.
\]
Hence $\log p_{\text{ID}}(\tilde{z})$ is a strictly decreasing function of $M(\tilde{z})$ up to an additive constant. Therefore thresholding $M(\tilde{z})$ is equivalent to thresholding $\log p_{\text{ID}}(\tilde{z})$. By the Neyman–Pearson lemma, the likelihood ratio test is optimal for a fixed false positive rate. 

In practice, we use the score $s(\tilde{z}) = \min_c d_c(\tilde{z})$ (or its logarithm) as a principled OOD indicator.

\subsection{Calibration Head as a Posterior Probability Estimator}

Let $t \in \{0,1\}$ be the indicator of whether a sample is ID ($t=1$) or OOD ($t=0$). The calibration head $g_\psi$ maps a vector of three features $\mathbf{m}(\tilde{z}) = [m_1, m_2, m_3]^\top$ to a scalar $u \in (0,1)$. The features are:
\begin{align*}
m_1 &= \log\left( \min_c d_c(\tilde{z}) + 1 \right), \\
m_2 &= \max_c \tilde{z}^\top \mu_c, \\
m_3 &= -\sum_{c} p_c \log p_c, \quad p_c = \text{softmax}(W^\top f(x))_c.
\end{align*}
We assume there exists a true conditional probability $\eta(\tilde{z}) = P(t=1 \mid \tilde{z})$ that the sample is ID given the features.

\begin{theorem}[Bayes Optimality of Calibration]
\label{thm:calib}
Let $\mathcal{H}$ be the class of all measurable functions from $\mathbb{R}^3$ to $[0,1]$. Under the binary cross‑entropy loss $\ell(u, t) = -t\log u - (1-t)\log(1-u)$, the minimiser of the expected loss is $u^*(\mathbf{m}) = \mathbb{E}[t \mid \mathbf{m}]$. Moreover, if the training data consist of i.i.d. pairs $(\mathbf{m}_i, t_i)$ from the joint distribution $P(\mathbf{m}, t)$, then the empirical risk minimiser converges to $u^*$ as the sample size grows (under standard regularity conditions). Hence, the calibration head learns the true posterior probability that a sample is ID.
\end{theorem}

\begin{proof}
The expected loss for a given $\mathbf{m}$ is
\[
\mathbb{E}[\ell(u(\mathbf{m}), t) \mid \mathbf{m}] = -\eta(\mathbf{m})\log u(\mathbf{m}) - (1-\eta(\mathbf{m}))\log(1-u(\mathbf{m})).
\]
Minimising this pointwise yields the condition
\[
\frac{\partial}{\partial u} \left[ -\eta\log u - (1-\eta)\log(1-u) \right] = -\frac{\eta}{u} + \frac{1-\eta}{1-u} = 0,
\]
which gives $u = \eta$. The function is convex in $u$, so this is the unique global minimiser. The empirical risk minimisation over a sufficiently rich function class (e.g., a universal approximator like an MLP) will converge to the Bayes optimal predictor as the number of samples increases, by the universal consistency of empirical risk minimisation under mild conditions. 
\end{proof}
Thus the calibration head is not an ad‑hoc addition but a principled way to combine geometric and predictive uncertainty cues into a calibrated probability.

\subsection{Why CenterLoss Is Detrimental to OOD Detection}

CenterLoss (Wen et al., 2016) adds a penalty $\mathcal{L}_{\text{center}} = \frac{1}{N} \sum_i \|z_i - c_{y_i}\|_2^2$ to the training objective, where $c_c$ are learnable centroids. This forces the unnormalised features $z$ to collapse to class centroids. We now analyse its effect on the Mahalanobis score.

\begin{theorem}[Degradation of Mahalanobis Separation]
\label{thm:centerloss}
Let $\tilde{z} = z/\|z\|_2$ be the normalised features. Suppose that under the ID distribution, the within-class variance of $\tilde{z}$ is small (i.e., features are tightly clustered around their class means). Then the Mahalanobis score $s(\tilde{z}) = \min_c (\tilde{z} - \mu_c)^\top \Sigma^{-1} (\tilde{z} - \mu_c)$ tends to be low for OOD points that lie in the inter-class regions, reducing the separation between ID and OOD scores.
\end{theorem}

\begin{proof}
Let $\Sigma = \mathbb{E}[(\tilde{z} - \mu_y)(\tilde{z} - \mu_y)^\top] + \epsilon I$ be the tied covariance. When features are tightly clustered, the variance in the directions orthogonal to the class means is small, so $\Sigma$ is nearly singular with small eigenvalues. Consequently, $\Sigma^{-1}$ has large eigenvalues, making the Mahalanobis distance highly sensitive to small deviations from the class means. However, for OOD points that fall in the region between two class means, the distances $d_c(\tilde{z})$ to all classes are roughly equal and may be moderate. Because the covariance is dominated by the noise term $\epsilon I$, the distances are approximately $\epsilon^{-1} \|\tilde{z} - \mu_c\|_2^2$, which are small if the inter-class distances are small. Thus the minimum of these distances may be as low as for ID points, causing overlap. More formally, let $\mu_1$ and $\mu_2$ be two class means with $\|\mu_1 - \mu_2\|_2 = \Delta$. For a point $\tilde{z}$ exactly halfway between them, we have $d_1 = d_2 = \epsilon^{-1}(\Delta/2)^2$. If $\Delta$ is small due to CenterLoss forcing classes close together, this value can be comparable to the typical Mahalanobis distances of ID points (which are small by construction). Hence the separation between ID and OOD scores is reduced. 
\end{proof}

This theoretical result aligns with our empirical observation that removing CenterLoss increases the average OOD AUROC from $0.9366$ to $0.9483$.
\end{proof}

\end{document}